\def\eqref#1{equation~\ref{#1}}
\def\1{\bm{1}}
\def\vtheta{{\bm{\theta}}}
\DeclareMathAlphabet{\mathsfit}{\encodingdefault}{\sfdefault}{m}{sl}
\SetMathAlphabet{\mathsfit}{bold}{\encodingdefault}{\sfdefault}{bx}{n}
\def\sR{{\mathbb{R}}}
\newcommand{\E}{\mathbb{E}}
\theoremstyle{plain}
\newtheorem{theorem}{Theorem}[section]
\newtheorem{lemma}[theorem]{Lemma}
\theoremstyle{definition}
\theoremstyle{remark}
\providecommand{\xvar}{\mathbf{x}}
\providecommand{\zvar}{\mathbf{z}}
\providecommand{\yvar}{\mathbf{y}}
\providecommand{\uvar}{\mathbf{u}}
\providecommand{\vvar}{\mathbf{v}}
\title{Reweighted Flow Matching via Unbalanced OT for Label-free Long-tailed Generation}
\author{Hyunsoo Song\\
National Institute for Mathematical Sciences\\
\texttt{song@nims.re.kr} \\
\And
Minjung Gim\\
National Institute for Mathematical Sciences\\
\texttt{mjgim@nims.re.kr} \\
\And
Jaewoong Choi\thanks{Corresponding Author} \\
Sungkyunkwan University \\
\texttt{jaewoongchoi@skku.edu} \\
}
\begin{document}

\maketitle

\begin{abstract}
Flow matching has recently emerged as a powerful framework for continuous-time generative modeling. However, when applied to long-tailed distributions, standard flow matching suffers from majority bias, producing minority modes with low fidelity and failing to match the true class proportions. In this work, we propose \textit{\textbf{Unbalanced Optimal Transport Reweighted Flow Matching (UOT-RFM)}}, a novel framework for generative modeling under class-imbalanced (long-tailed) distributions that operates without any class label information. Our method constructs the conditional vector field using mini-batch Unbalanced Optimal Transport (UOT) and mitigates majority bias through a principled inverse reweighting strategy. The reweighting relies on a label-free majority score, defined as the density ratio between the target distribution and the UOT marginal. This score quantifies the degree of majority based on the geometric structure of the data, without requiring class labels. By incorporating this score into the training objective, UOT-RFM theoretically recovers the target distribution with first-order correction ($k=1$) and empirically improves tail-class generation through higher-order corrections ($k > 1$). Our model outperforms existing flow matching baselines on long-tailed benchmarks, while maintaining competitive performance on balanced datasets.
\end{abstract}

\section{Introduction}
Generative modeling aims to learn a model that can approximate a target data distribution. In recent years, deep generative models have achieved remarkable progress across various domains, such as GANs \citep{gan, wgan}, optimal transport maps \citep{otm, fanTMLR, uotm}, and diffusion models \citep{ddpm, scoresde}. Among these, flow matching models \citep{flowmatching} have emerged as a promising approach for continuous-time generative modeling. Flow matching learns a continuous normalizing flow \citep{neuralode}, i.e., a vector field that transports samples from a prior distribution to a target distribution, while avoiding costly numerical likelihood estimation. The flow matching model is trained through regression to a conditional vector field, constructed from the conditional probability path between prior and target samples. Despite these computational advantages, flow matching models face similar challenges as other generative approaches when dealing with real-world data characteristics.

A particularly challenging scenario in real-world data is \textbf{long-tailed or imbalanced distributions}, where a few majority (head) classes dominate and many minority (tail) classes are severely underrepresented \citep{yang2022survey}. Standard generative models often suffer from \textbf{majority bias} under such settings: head classes are well-modeled, but tail classes are badly generated or ignored  \citep{cifar10-lt, CBDM}. This leads to several issues, including inaccurate class proportions, reduced sample diversity, and degraded generation quality for tail classes. To mitigate this, various long-tailed generative models have been proposed, such as GAN-based approaches \citep{cbgan, rangwani2022improving} and diffusion-based methods \citep{LTDM, CBDM}.
While effective, these methods rely heavily on \textbf{explicit class label information} to improve generation for minority classes. Despite these advances, the flow matching framework has not yet been explored in the context of long-tailed generation. In this work, we fill this gap by analyzing how flow matching models behave under class imbalance and proposing a label-free method to improve their performance in long-tailed regimes.

To overcome these challenges, we propose a novel flow matching model based on the Unbalanced Optimal Transport (UOT) \citep{uot1,uot2}, called \textbf{\textit{UOT-Reweighted Flow Matching (UOT-RFM)}}. Our method leverages mini-batch UOT to construct the conditional vector field and mitigates majority bias through a principled \textit{inverse reweighting} scheme with a \textit{label-free majority score}. This score measures class dominance based on the geometric properties of the data and is defined as the density ratio between the target distribution and the UOT marginal. By incorporating this score into the flow matching objective, UOT-RFM adaptively reweights training samples. With first-order correction ($k=1$), the model recovers the original data distribution. With higher-order corrections ($k>1$), the model further compensates for majority bias by emphasizing tail samples. This reweighting mechanism enables the model to improve generation quality for underrepresented classes without requiring class labels. Our experiments demonstrate that UOT-RFM significantly outperforms existing flow matching baselines in both tail-class fidelity and accurate recovery of class proportions. Moreover, our model maintains competitive performance on balanced datasets (CIFAR-10 and CIFAR-100). Our contributions can be summarized as follows:
\begin{itemize} 
    \item We propose UOT-RFM, the first flow matching framework for long-tailed generative modeling, built on mini-batch Unbalanced Optimal Transport.
    \item We introduce a majority score, derived from the density ratio between the target and UOT marginal distributions, enabling sample-wise reweighting without access to class labels.
    \item We theoretically and empirically show that higher-order correction using the majority score improves tail sample generation while preserving overall performance.
    \item To the best of our knowledge, UOT-RFM is the first label-free method for long-tailed generative modeling.
\end{itemize}

\section{Preliminaries}
\paragraph{Flow Matching} 

Continuous Normalizing Flows (CNFs)~\citep{neuralode, flowmatching} model the dynamics of the probability densities through a \textit{probability density path} $p: [0, 1] \times \sR^d \mapsto \sR_{\ge 0}$, where $p_t(\mathbf{x}) := p(t, \mathbf{x})$ denotes the density at time $t$, which transports the initial or source distribution (e.g., Gaussian distribution) $p_0$ to the target data distribution $p_1$. Specifically, the CNF model is defined by the following Ordinary Differential Equation (ODE), governed by a vector field $\mathbf{v}: [0, 1] \times \sR^d \mapsto \sR^d$, where $\mathbf{v}_t(\mathbf{x}) := \mathbf{v}(t, \mathbf{x})$:
\begin{equation} \label{eq:ode}
\frac{\mathrm d \mathbf{x}_t}{\mathrm dt} = \mathbf{v}_t(\mathbf{x}_t),
\end{equation}
where $\mathbf{x}_t \in \sR^d$ denotes the state variable at time $t$, and we use the notation $\mathbf{v}_t (\mathbf{x})$ interchangeably with $\mathbf v(t, \mathbf{x})$. Then, the associated flow map $\phi_{t}(\mathbf{x})$ denotes the solution of this ODE with initial condition $\phi_{0}(\mathbf{x})=\mathbf{x}$ and the density at time $t$ is given by $p_{t}=(\phi_{t})_{\#} p_{0}$.

\citet{flowmatching} proposed \textit{flow matching}, a scalable method for training CNFs. The idea is to train the CNF by minimizing a regression loss $\mathcal{L}_{\mathrm{FM}}(\vtheta)$ between the parameterized vector field $\vvar_{t}^{\theta}$ and the ground-truth vector field $\uvar_{t}$ that generates the probability path $p_{t}$. However, a major challenge is that the marginal ground-truth vector field $\uvar_{t} $ is intractable. 
\begin{equation} \label{eq:fm_loss}
    \mathcal{L}_{\mathrm{FM}}(\vtheta) = \E_{t \sim \mathcal{U}[0,1], \xvar_{t} \sim p_{t}(\xvar_{t}) } \| \vvar_{\theta}(t, \xvar_{t}) - \mathbf u_{t}(\xvar_{t}) \|_{2}^{2}.
\end{equation}
To overcome this, the flow matching \citep{flowmatching, tong2024improving} introduces a conditional flow matching. Instead of matching $\mathbf u_{t} $, the model is trained to regress the tractable  \textit{conditional vector field} $\uvar_{t}(\xvar_{t} | \zvar)$, which generates a \textit{conditional probability path} $p_{t}(\xvar_{t} | \zvar)$, where $\zvar$ denotes sample pairs ($\xvar_{0}, \xvar_{1}$). The sample pairs $(\xvar_{0}, \xvar_{1})$ follow the joint distribution (couplings) of $\pi(\zvar) = \pi (\xvar_{0}, \xvar_{1})$. The training objectives are given by
\begin{equation} \label{eq:cfm_loss}
\mathcal{L}_{\mathrm{CFM}}(\vtheta) = \E_{t \sim \mathcal{U}[0,1], \zvar \sim \pi(\zvar), \xvar_{t} \sim p_{t}(\xvar_{t}|\zvar) } \| \vvar_{\theta}(t, \xvar_{t}) - \uvar_{t|\zvar}(\xvar_{t}|\zvar) \|_{2}^{2}.
\end{equation}
CFM replaces the intractable marginal vector field with a tractable conditional one based on couplings. In particular, the conditional probability path $p_{t}(\xvar_{t} | \zvar)$ and the associated conditional vector field $\uvar_{t}(\xvar_{t} | \zvar)$ can be defined as follows \citep{tong2024improving}:
\begin{equation} \label{eq:cfm_conditional}
p_{t}(\xvar_{t} \mid \zvar) = \mathcal{N} \left( \xvar_t \mid t \xvar_{1} + (1-t) \xvar_{0} \mid \sigma^{2} I \right), \quad
\uvar_{t}(\xvar_{t} \mid \zvar) = \xvar_{1} - \xvar_{0},
\end{equation}
where $\sigma > 0$ is a bandwidth hyperparameter. In this case, the marginal probability path and the marginal vector field that generates this path are given by
\begin{equation} \label{eq:cfm_marginal}
    p_{t} (\xvar_{t}) = \int p_{t}( \xvar_{t} \mid \zvar) \pi(\zvar) d \zvar, \;\;
    \uvar_{t} (\xvar_{t}) := \mathbb{E}_{\pi(\zvar)} \left [ \frac{\uvar_{t} (\xvar \mid \zvar) p_{t}(\xvar \mid \zvar)} {p_{t}(\xvar)} \right ]
    = \mathbb{E}_{p_{t}(\zvar | \xvar_{t})} \left[ \uvar_{t} (\xvar_{t} \mid \zvar) \right].
\end{equation}

\paragraph{Initial Coupling in Flow Matching}
A key component in training flow matching models is the choice of the initial coupling $\zvar = (\xvar_0, \xvar_1)$ with joint distribution $\pi(\zvar) = \pi(\xvar_{0}, \xvar_{1})$. \textbf{The choice of coupling crucially determines the training dynamics of flow matching models}, because the obtained model $\vvar_{\theta}(\xvar_{t}) \approx \uvar_{t} (\xvar_{t})$ relies on aggregating the conditional vector field over paired samples $p_{t}(\zvar | \xvar_{t})$ (Eq. \ref{eq:cfm_marginal}).
The original flow matching framework \citep{flowmatching} employs an independent coupling between the source and target distributions, i.e., $\pi(\xvar_{0}, \xvar_{1}) = \mu(\xvar_{0}) \otimes \nu(\xvar_{1})$. However, such independence often leads to curved trajectories that incur high computational costs during sampling  \citep{rectifiedflow}.
To improve couplings, recent works adopted the \textit{Optimal Transport (OT)} approaches between mini-batches \citep{minitbatchot, tong2024improving}. Note that the Kantorovich formulation of the Optimal Transport is given by
\begin{equation}\label{eq:ot_kantorovich} 
    C_{OT}(\mu, \nu) := \inf_{\pi \in \Pi(\mu, \nu )}  \left[ \int_{\mathcal{X}\times \mathcal{Y}} c(\xvar, \mathbf{y}) d \pi (\xvar,\mathbf{y}) \right],
\end{equation}
where $\Pi(\mu, \nu )$ denotes the set of all joint probability measures on $\mathcal{X}\times \mathcal{Y}$ whose marginals are $\mu$ and $\nu$ respectively. Here, the optimal coupling $\pi$ is defined as the minimizer of the transport cost $c(\xvar,\yvar)$ between empirical measures of mini-batches from the source samples $\xvar_{0}$ and target samples $\xvar_{1}$. 

\begin{figure}[t]
    \centering
    \includesvg[width=.6\linewidth]{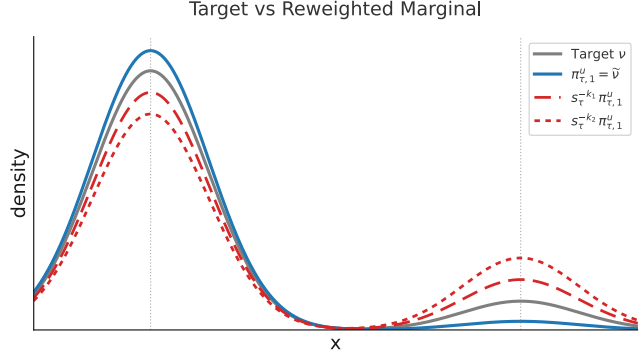}
    \caption{
    \textbf{Comparison of the target data distribution $\nu$ and the reweighted marginals $s_{\tau}^{-k_{1}} \pi_{\tau, 1}^{u}$ from UOT-RFM} with correction order $k$ (where $1 < k_{1} < k_{2}$). The UOT marginal $\pi_{\tau, 1}^{u}$ downweights the minority classes. UOT-RFM adaptively upweights minority modes via the majority score $s_{\tau}$.
    }
    \label{fig:concept}
    \vspace{-8pt}
\end{figure}

\section{Method}
In this section, we present our model, \textbf{\textit{UOT-Reweighted Flow Matching (UOT-RFM)}}, that addresses the majority bias of flow matching models on long-tailed distributions. Our model leverages mini-batch UOT coupling, which naturally provides a \textbf{\textit{majority score}} for each target sample. Intuitively, we compensate for majority bias by reweighting each target sample with the majority score. In Sec~\ref{sec:uot}, we introduce the UOT problem. In Sec~\ref{sec:uot_fm}, we introduce our UOT-RFM model.

\subsection{Unbalanced Optimal Transport} \label{sec:uot}
Our proposed method builds on the \textit{Unbalanced Optimal Transport (UOT) problem} \citep{uot1,uot2}. 
In this regard, we introduce the UOT problem and its key properties, which will be leveraged in our approach. 
In the classical OT problem (Eq.~\ref{eq:ot_kantorovich}), the marginal distributions of the coupling $\pi$ are constrained to \textit{exactly} match the source and target distributions, i.e., $\pi_{0} = \mu$ and $\pi_{1} = \nu$. 
Although this exact marginal matching is a core principle of OT, it also makes the formulation highly sensitive to outliers \citep{robust-ot, uot-robust, u-notb, uotm}. 
In contrast, the \textit{Unbalanced Optimal Transport} formulation relaxes these hard marginal constraints by introducing divergence penalties between the marginals $\pi_{0}, \pi_{1}$ and the source/target measures $\mu, \nu$. This relaxation enables approximate transport, thereby improving robustness to outliers.
Formally, the Kantorovich-type UOT formulation is given by:
\begin{equation} \label{eq:uot}
    C_{UOT}(\mu, \nu) = \inf_{\pi \in \mathcal{M}_+(\mathcal{X}\times\mathcal{Y})} \biggl[ \int_{\mathcal{X}\times \mathcal{Y}} c(\xvar,\yvar) d \pi(\xvar,\yvar)  
    + \tau_{1}D_{\Psi_1}(\pi_0 \| \mu) + \tau_{2} D_{\Psi_2}(\pi_1 \| \nu) \biggr], 
\end{equation} 
where we assume $c(\xvar,\yvar) = 1/2 \|\xvar-\yvar\|_{2}^{2}$ and $\tau_{1}, \tau_{2} > 0$ control the strength of the marginal matching penalties. Here, $\mathcal{M}_+ (\mathcal{X}\times\mathcal{Y})$ indicates the set of positive Radon measures on $\mathcal{X}\times \mathcal{Y}$. The terms $D_{\Psi_1}(\pi_0 \| \mu)$ and $D_{\Psi_2}(\pi_1 \| \nu)$ are two $f$-divergences that penalize deviations of the coupling marginals $\pi_{0}, \pi_{1}$ from the source $\mu$ and target $\nu$, respectively. The $f$-divergence $D_{\Psi}$ is defined as $D_{\Psi}(\pi_{i} \| \eta) = \int \Psi \left( \frac{d\pi_i(\xvar)}{d\eta(\xvar)} \right) d\eta(\xvar)$ for the convex function $\Psi$. This relaxed formulation allows the optimal UOT coupling $\pi^u$ (which depends on $\tau_1$ and $\tau_2$) to softly match the marginals, i.e., $\pi_0^u \approx \mu$ and $\pi_1^u \approx \nu$, in contrast to the exact marginal constraints in standard OT.

Moreover, the UOT problem can represent exact matching of one marginal by appropriately setting the divergence penalty. Specifically, if $\Psi_{i}$ is the convex indicator function $\iota$ at $\{ 1 \}$, then $D_{\iota}(\pi_{i} \| \eta)= 0 \text{ if } \pi_{i} = \eta \text{ a.s., and } \infty \text{ otherwise. }$ For example, setting $\Psi_{1} = \iota$  (i.e., $\tau_1=\infty$) yields the \textbf{source-fixed UOT problem}, where $\pi^{u}_{0} =\mu$ and $\pi^{u}_{1} \approx \nu$. In this case, the optimal coupling depends only on the single parameter $\tau$. In our approach, we employ this \textbf{source-fixed UOT} formulation to ensure that the initial distribution of the flow matching model aligns exactly with the source distribution.

\subsection{Proposed Method} \label{sec:uot_fm}
\paragraph{Problem Statement}
Our goal is to develop a generative model that performs well on \textbf{long-tailed data distributions, without relying on class labels}. Formally, we are given a long-tailed dataset $\mathcal{D} = \{(\xvar^{i}, \yvar^{i})\}_{i=1}^N$, where each $\xvar_{i}$ is an input image and $\yvar^{i} \in \mathcal{C}$ is its corresponding class label. In the long-tailed setting, a small number of classes (\textbf{head classes}) dominate with many samples, while most classes (\textbf{tail classes}) have only a few, resulting in severe class imbalance.
Specifically, let $\mathcal{C} = \{c_1, c_2, \dots, c_M \}$ denote the set of $M$ classes, ordered by decreasing sample count such that $n_1 \ge n_2 \ge \dots \ge n_M$, where $n_j$ is the number of training samples in class $c_j$. The imbalance ratio is defined as $\mathcal{I} = {n_M}/{n_1}$. For instance, under an exponentially decaying class distribution, the class sizes follow  $n_i = \lfloor n_{1} \cdot \mathcal{I}^{\frac{i}{M-1}} \rfloor$. The objective of long-tailed generative modeling is to learn a model that can faithfully generate all classes, including the tail classes. Moreover, in our setting, the model is trained and evaluated \textit{without using any class label information}.

\begin{figure}[t]
    \centering
    \begingroup
    \scriptsize
    \includesvg[width=0.7\linewidth]{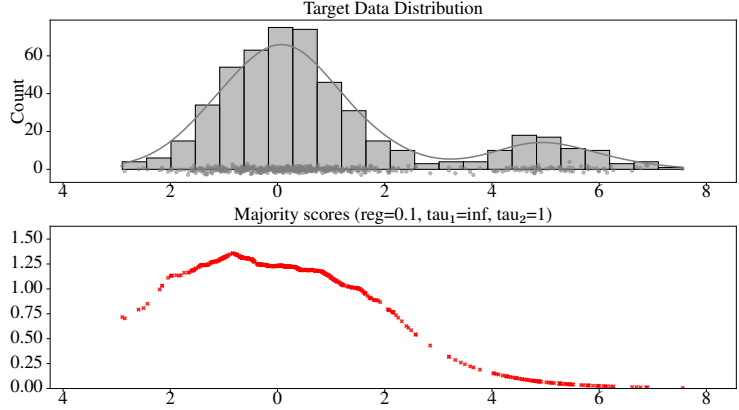}
    \endgroup
    \caption{\textbf{Example of majority score $s_{\tau}$ computed via mini-batch UOT.} The source distribution is standard Gaussian $\mathcal{N}(0, I)$, and the target distribution is a Gaussian mixture (top). The majority scores (bottom) are higher in majority regions and lower in minority regions.
    }
    \label{fig:majority_score}
    \vspace{-8pt}
\end{figure}

\paragraph{Majority Score}
Our method leverages the \textbf{mini-batch UOT coupling} $\pi^{u}$ and the induced \textbf{majority score}, defined as the density ratio $s_{\tau}$. 
This score is used to address the majority bias of flow matching models on long-tailed distributions by inversely reweighting each target sample during training.  
Intuitively, the optimal UOT coupling $\pi^{u}$ favors transport plans where a small increase in the divergence penalty $D_{\Psi}$ leads to a large decrease in the transport cost $c(\xvar,\yvar)$ (Eq.~\ref{eq:uot}). 
Consequently, $\pi^{u}$ tends to concentrate mass on high-density (majority) modes while down-weighting low-density (tail or outlier) modes, which contribute little mass and incur high transport costs (Fig. \ref{fig:majority_score}).
This mechanism underlies the robustness of UOT to outliers, as it effectively mitigates their influence \citep{robust-ot, uot-robust, uotm}.  

Based on this property, we formally define the \textbf{\textit{majority score}} under the source-fixed UOT problem as
\begin{equation}\label{eq:score}
    s_{\tau}(\mathbf{x}_1) := \frac{d \pi^{u}_{\tau, 1}}{d \nu }(\mathbf{x}_1) > 0,
\end{equation}
where $\pi^{u}_{\tau,1}$ denotes the target marginal of the UOT coupling, and $s_{\tau}: \mathbb{R}^d \to \mathbb{R}_{>0}$ is defined on the target distribution space.
Since we employ the source-fixed UOT formulation $\tau_1=\infty$, the coupling depends only on $\tau_2$. Thus, we simply denote $\tau=\tau_2$ for clarity. Intuitively, the majority score measures how strongly each target sample is emphasized by the UOT coupling. $s_{\tau} > 1$ indicates emphasized majority samples, while $s_{\tau} \ll 1$ correspond to down-weighted outlier samples. Here, note that this process is entirely \textbf{unsupervised}, i.e., label-free. The down-weighting of outlier modes is conducted by the intrinsic geometric structure of probability distributions (See Appendix \ref{sec:app_uot} for the formal theoretical relationship between the marginal distributions of $\pi^{u}_{\tau}$ and the original source and target distributions).

\paragraph{Proposed Method}

We now introduce our learning objective for modeling long-tailed distributions. Our method consists of two key components: (1) mini-batch (source-fixed) UOT coupling $\pi_{\tau}^{u}$ and (2) rebalancing minority samples through importance (over-)correction using majority score $s_{\tau}(\cdot)$. 
Our corrected conditional flow matching objective with correction order $k \geq 1$ is defined as follows (Algorithm \ref{alg:uot_rfm}):
\begin{equation} \label{eq:uot_wfm_loss}
    \mathcal{L}_{\mathrm{ours}, k}(\vtheta) = \E_{t \sim \mathcal{U}[0,1], \zvar \sim \pi_{\tau}^{u}(\zvar), \xvar_{t} \sim p_{t}(\xvar_{t}|\zvar) } \,\, \left[
    s_{\tau}(\xvar_{1})^{-k}\| \vvar_{\theta}(t, \xvar_{t}) - \uvar_{t|\zvar}(\xvar_{t}|\zvar) \|_{2}^{2} \right],
\end{equation}
where the conditioning variable $\zvar=(\xvar_{0}, \xvar_{1})$.
Compared with standard flow matching (Eq. \ref{eq:cfm_conditional}), our formulation employs the UOT coupling $\pi^{u}$ for pairing $\zvar$ and introduce an additional weighting factor $s_{\tau}(\xvar_{1})^{-k}$ that rebalances majority and minority samples. Our method is motivated by the following bias correction theorem (see Appendix \ref{sec:app_proof} for proof):

\begin{theorem} \label{thm:bias_correction}
    Let $\pi_{\tau}^{u}$ be the optimal source-fixed UOT coupling between $\mu$ and $\nu$ with $\tau_{2} = \tau > 0$ and assume that its target marginal satisfies $\nu \ll \pi_{\tau, 1}^{u}$, i.e., $\nu$ is absolutely continuous w.r.t. $\pi^u_{\tau,1}$. Training a flow matching model with $\pi_{\tau}^{u}$ yields the biased distribution $p_{1} = \pi^{u}_{\tau, 1} \neq \nu$. However, applying the first-order correction (our method with $k=1$) recovers the true target distribution $\nu$.
    \begin{equation} 
    \mathcal{L}_{\mathrm{ours, k=1}}(\vtheta) = \E_{t \sim \mathcal{U}[0,1], \zvar \sim \pi_{\tau}^{u}(\zvar), \xvar_{t} \sim p_{t}(\xvar_{t}|\zvar) } \,\, \left[
    s_{\tau}(\xvar_{1})^{-1}\| \vvar_{\theta}(t, \xvar_{t}) - \uvar_{t|\zvar}(\xvar_{t}|\zvar) \|_{2}^{2} \right]. 
    \end{equation}
    More generally, UOR-RFM with correction order $k$ generates $p_1 \propto s_{\tau}^{-k} \pi^{u}_{\tau, 1}=s_{\tau}^{-(k-1)} \nu$.
\end{theorem}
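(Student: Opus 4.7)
The plan is to reinterpret the weighted objective $\mathcal{L}_{\mathrm{ours},k}$ as a standard (unweighted) conditional flow matching loss with respect to a \emph{reweighted} coupling, and then invoke the usual CFM result that identifies the generated distribution with the time-$1$ marginal of the coupling (Eq.~\ref{eq:cfm_marginal}). The key observation is that the per-sample weight $s_{\tau}(\xvar_1)^{-k}$ only depends on $\xvar_1$, so it can be absorbed into the coupling.

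First, I would define the reweighted coupling
\begin{equation*}
    \tilde{\pi}(\zvar) \;:=\; \frac{s_{\tau}(\xvar_1)^{-k}}{Z_k}\, \pi_{\tau}^{u}(\zvar),
    \qquad Z_k \;:=\; \int s_{\tau}(\xvar_1)^{-k}\, d\pi_{\tau,1}^{u}(\xvar_1).
\end{equation*}
Under the assumption $\nu \ll \pi_{\tau,1}^{u}$, the density $s_{\tau} = d\pi_{\tau,1}^{u}/d\nu$ is well defined and strictly positive, and for $k=1$ the normalizer is immediate: $Z_1 = \int s_{\tau}^{-1}\, d\pi_{\tau,1}^{u} = \int d\nu = 1$. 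Pulling the constant $Z_k$ out shows that $\mathcal{L}_{\mathrm{ours},k}(\vtheta) = Z_k \cdot \mathcal{L}_{\mathrm{CFM}}^{\tilde{\pi}}(\vtheta)$, so the two objectives share the same minimizer in $\vtheta$.

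Second, I would compute the target marginal of $\tilde{\pi}$. Using $d\pi_{\tau,1}^{u} = s_{\tau}\, d\nu$, we obtain
\begin{equation*}
    d\tilde{\pi}_1(\xvar_1) \;=\; \frac{s_{\tau}(\xvar_1)^{-k}}{Z_k}\, d\pi_{\tau,1}^{u}(\xvar_1) \;=\; \frac{s_{\tau}(\xvar_1)^{-(k-1)}}{Z_k}\, d\nu(\xvar_1).
\end{equation*}
For $k=1$ this collapses to $d\tilde{\pi}_1 = d\nu$; for general $k$ it gives $\tilde{\pi}_1 \propto s_{\tau}^{-k}\, \pi_{\tau,1}^{u} = s_{\tau}^{-(k-1)}\, \nu$, matching the claim. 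The third step is then to apply the standard CFM identity (Eq.~\ref{eq:cfm_marginal}) to the coupling $\tilde{\pi}$: the induced marginal vector field generates the probability path $\tilde{p}_t(\xvar_t) = \int p_t(\xvar_t\mid \zvar)\, \tilde{\pi}(\zvar)\, d\zvar$, whose time-$1$ marginal is exactly $\tilde{\pi}_1$ (in the Dirac limit $\sigma \to 0$, or up to Gaussian convolution of width $\sigma$ otherwise). Since the optimizer of $\mathcal{L}_{\mathrm{ours},k}$ coincides with that of $\mathcal{L}_{\mathrm{CFM}}^{\tilde{\pi}}$, the resulting model pushes $\mu$ to $\tilde{\pi}_1$, giving the stated formulas for $p_1$.

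The main obstacle, or at least the point that requires care, is controlling the normalizer $Z_k$: for $k > 1$, $s_{\tau}^{-k}$ may fail to be $\pi_{\tau,1}^{u}$-integrable if $s_{\tau}$ is not bounded away from zero, so the reweighted coupling is only well defined under an additional integrability hypothesis. For the key first-order case $k=1$, however, $Z_1 = 1$ automatically, and the argument is entirely unconditional given $\nu \ll \pi_{\tau,1}^{u}$. A secondary subtlety is the $\sigma>0$ smoothing in Eq.~\ref{eq:cfm_conditional}: strictly speaking the generated distribution is $\tilde{\pi}_1 * \mathcal{N}(0,\sigma^2 I)$, and the identification with $\tilde{\pi}_1$ should be stated either in the limit $\sigma \to 0$ or up to this convolution, in line with the standard flow matching literature.
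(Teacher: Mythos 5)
Your proposal is correct and follows essentially the same route as the paper's proof: absorb the weight $s_{\tau}(\xvar_1)^{-k}$ into a reweighted coupling, use the Radon--Nikodym relation $d\pi_{\tau,1}^{u} = s_{\tau}\, d\nu$ to identify its target marginal, and invoke the standard CFM marginal-recovery result (Lemma~\ref{lem:fm_recover_marginals}) to conclude that the trained model generates that marginal. Your explicit treatment of the normalizer $Z_k$ (with $Z_1=1$), the integrability caveat for $k>1$, and the $\sigma\to 0$ convolution is slightly more careful than the paper's ``up to a normalizing constant'' phrasing, but it is the same argument.
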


The assumption $\nu \ll \pi_{\tau, 1}^{u}$ ensures that all target modes, including tail classes, retain nonzero density under the UOT marginal, thereby enabling correction with $s_{\tau}(\xvar_{1})^{-k}$. Moreover, we impose a \textbf{source-fixed} condition on the UOT coupling (i.e., $\pi^{u}_{0} = \mu$) to ensure that the initial distribution of the flow matching model remains aligned with the source distribution.

Theorem \ref{thm:bias_correction} shows that training a flow matching model with UOT coupling (UOT-CFM, \citep{eyring2024unbalancedness}) yields a biased generated distribution $p_{1} = \pi^{u}_{\tau, 1} \neq \nu$. In particular, the distribution $\pi^{u}_{\tau, 1}$ magnifies the majority modes while suppressing the tail modes. 
This bias can be corrected by applying inverse weighting with the majority score $s_{\tau}$. 
Building on this, our method further addresses the majority bias of standard flow matching models by \textbf{over-correction} ($k>1$) with the majority score. Intuitively, this over-correction amplifies the contribution of tail-class samples with $s_{\tau}(\cdot) < 1$. In contrast to OT-CFM \citep{tong2024improving}, which adopts mini-batch OT coupling, our UOT-based approach provides an \textbf{unsupervised estimate of the majority score without requiring additional information such as tail-class label}. Refer to Appendix~\ref{sec:related_work} for a discussion of related works.

\paragraph{Minibatch UOT Approximation}
Following mini-batch OT approaches \citep{minitbatchot, tong2024improving}, we approximate the UOT coupling $\pi_{\tau}^{u}$ using a mini-batch formulation similar to \citep{minibatch_uot}. In practice, we adopt the POT library \citep{POT} to compute mini-batch UOT with entropic regularization \citep{minibatch_uot_algorithm1, minibatch_uot_algorithm2}.
Specifically, for each mini-batch of training data $( \{ \xvar_{0}^{i} \}_{i=1}^{B}, \{ \xvar_{1}^{j} \}_{j=1}^{B} )$, the mini-batch coupling $\hat{\pi}_{\tau}^{u}$ is computed between empirical measures $\hat{\mu} = \frac{1}{|B|} \sum_{i} \delta_{\xvar_{0}^{i}}$ and $\hat{\nu} = \frac{1}{|B|} \sum_{j} \delta_{\xvar_{1}^{j}}$. Based on this, the majority score is estimated by the probability mass ratio:
\begin{equation}\label{eq:weight_calculation}
	\hat{s}_{\tau}(\xvar_{1}^{j}) := \frac{\hat{\pi}^{u}_{\tau, 1}}{\hat{\nu}}(\xvar_{1}^{j})
= |B| \hat{\pi}^{u}_{\tau, 1}(\xvar_{1}^{j}).
\end{equation}

\begin{table}[t]
\centering
\caption{\textbf{Evaluation of marginal distribution matching under the LT$\to$LT setting for the long-tailed benchmarks.} We report FID scores ($\downarrow$) on CIFAR-10-LT and CIFAR-100-LT with two imbalance ratios: $\mathcal{I} = 0.01$ and $\mathcal{I} = 0.001$.
}
\label{tab:cifar_lt2lt}
\scalebox{0.9}{
\begin{tabular}{lccccc}
    \toprule
    \multirow{2}{*}{Model} & \multicolumn{2}{c}{CIFAR-10-LT} & \multicolumn{2}{c}{CIFAR-100-LT} \\
    \cmidrule(lr){2-3} \cmidrule(lr){4-5}
    & $\mathcal{I} = 0.01$ & $\mathcal{I} = 0.001$ & $\mathcal{I} = 0.01$ & $\mathcal{I} = 0.001$ \\
    \midrule
    I-CFM     & 14.57 & 17.54 & 25.55 & 31.86 \\
    OT-CFM    & 17.31 & 21.26 & 31.34 & 38.37 \\
    UOT-CFM   & 14.25 & 18.13 & 25.33 & 31.83 \\ \midrule
    ours      & \textbf{11.03} & \textbf{12.84} & \textbf{15.37} & \textbf{18.40} \\
    \bottomrule
\end{tabular}
}
\end{table}

\section{Experiments}
In this section, we evaluate our model from the following perspectives. 
\begin{itemize}
    \item In Sec \ref{sec:exp_longtail}, we evaluate our model on the long-tailed image datasets and analyze the majority bias of flow matching models.
    \item In Sec \ref{sec:exp_balanced}, we assess our model on the standard balanced image datasets, showing that our model remains competitive with small-order correction.
    \item In Sec \ref{sec:exp_abl}, we conduct ablation studies to investigate the effects of the correction order $k$ and the marginal matching parameter $\tau$.
\end{itemize}
In each experiment, our model is compared with several flow matching baselines: independent coupling (\textit{I-CFM}, \citep{flowmatching, tong2024improving}), OT coupling (\textit{OT-CFM}, \citep{tong2024improving, minitbatchot}), and \textit{UOT coupling} (UOT-CFM, \citep{eyring2024unbalancedness}). Implementation details are provided in Appendix \ref{sec:implementation_details}.

\begin{figure}[t]
    \centering
    \subfloat[I-CFM]{\includegraphics[width=0.3\textwidth]{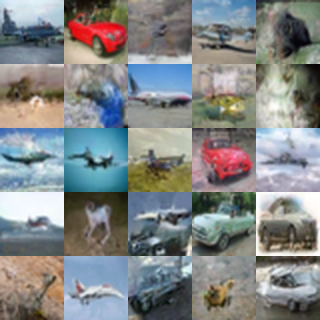}}
    \hfill
    \subfloat[OT-CFM]{\includegraphics[width=0.3\textwidth]{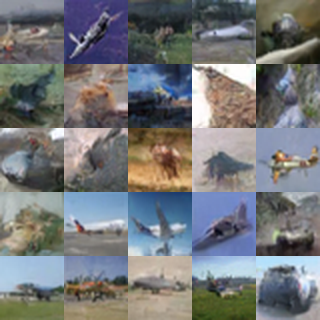}}
    \hfill
    \subfloat[UOT-RFM]{\includegraphics[width=0.3\textwidth]{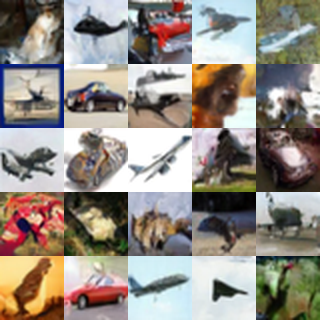}}
    \caption{\textbf{Qualitative comparison of generated samples} from flow matching models trained on CIFAR-10-LT with imbalance ratio $\mathcal{I} = 0.01$. UOT-RFM produces more diverse images compared to other baselines.
    }
    \label{fig:cifar10lt_generation}
    \vspace{-8pt}
\end{figure}

\begin{figure}[t]
    \centering
    \subfloat{\includesvg[width=0.8\textwidth]{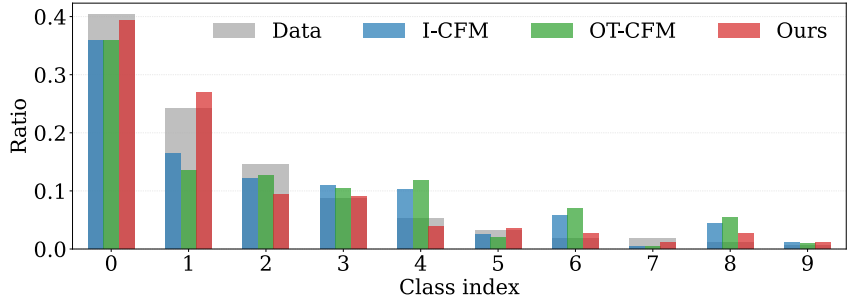}}
    \caption{\textbf{Generated class distribution on CIFAR-10-LT} with $\mathcal{I}=0.01$. The average Normalized Class Ratio Errors (NCREs) are: I-CFM = 0.84, OT-CFM = 1.02, and ours = 0.40.
    }
    \label{fig:class_dist_cifar10}
    \vspace{-8pt}
\end{figure}

\subsection{Evaluation on Long-Tailed Generation} \label{sec:exp_longtail}
We evaluate our model on two long-tailed generation benchmarks: \textbf{CIFAR-10-LT} and \textbf{CIFAR-100-LT} \citep{cifar10-lt}. These datasets are constructed by subsampling the balanced CIFAR-10 and CIFAR-100 datasets \citep{cifar10} according to an exponential decaying long-tailed class distribution. 
The degree of imbalance is quantified by the imbalance ratio $\mathcal{I}$, defined as the ratio between the sample sizes of the most and least frequent classes, i.e., $\mathcal{I} = \min_{i} \{ n_{i} \}  / \max_{i} \{ n_{i} \} $.

\paragraph{Long-Tailed Generation}
We first assess whether our model accurately approximates the true marginal distribution under long-tailed settings. To this end, we consider two evaluation settings, while keeping the training data fixed to the long-tailed dataset:
\begin{itemize}[leftmargin=*, topsep=-1pt, itemsep=-1pt]
    \item \textbf{LT$\to$LT}: The test set is also long-tailed (e.g., CIFAR-10-LT test set). This setting assesses how well a model captures the long-tailed distribution.
    \item \textbf{LT$\to$Balanced}: The test set is the original balanced dataset (e.g., CIFAR-10 test set). This setting examines whether a model trained on imbalanced data can recover a balanced distribution. This evaluation setup is often used in supervised long-tailed learning when class labels are available~\citep{LTDM, CBDM}.
\end{itemize}
Our primary evaluation protocol is LT$\to$LT, because our goal is to directly model and approximate the long-tailed data distribution without relying on class supervision. For completeness, we also report LT$\to$Balanced results under an imbalance ratio of $\mathcal{I} = 0.01$.

Table \ref{tab:cifar_lt2lt} provides FID \citep{fid} scores on CIFAR-10 and CIFAR-100 under the \textbf{LT$\to$LT setting} for two class imbalanced ratios, i.e., $\mathcal{I}\in \{0.01, 0.001\}$. Fig. \ref{fig:cifar10lt_generation} shows qualitative comparisons of generated samples from baseline flow matching models trained on CIFAR-10-LT with $\mathcal{I} = 0.01$. 
In both imbalance ratios, UOT-RFM achieves significant improvement in the FID score, demonstrating a more accurate approximation of the long-tailed data distribution. Note that this performance gain comes with minimal computational overhead: UOT-RFM requires only about 7\% more training time compared to OT-CFM.

\begin{wraptable}{r}{0.5\linewidth}
\centering
\vspace{-5pt}
\caption{\textbf{Evaluation of marginal distribution matching under the LT$\to$Balanced setting for the long-tailed benchmarks.} We report FID scores ($\downarrow$) with imbalance ratio $\mathcal{I} = 0.01$.
}
\label{tab:cifar_lt2bal}
\scalebox{0.9}{
\begin{tabular}{lcc}
    \toprule
    Model & CIFAR-10-LT & CIFAR-100-LT \\
    \midrule
    I-CFM     & 25.46 & 24.39 \\
    OT-CFM    & 27.51 & 29.19 \\
    UOT-CFM   & 24.94 & 24.05 \\ \midrule
    ours      & \textbf{24.06} & \textbf{16.83} \\
    \bottomrule
\end{tabular}
}
\end{wraptable}

Moreover, Table~\ref{tab:cifar_lt2bal} reports FID scores under the \textbf{LT$\to$Balanced} setting with $\mathcal{I} = 0.01$. The trends are consistent with those in the LT$\to$LT evaluation. UOT-RFM outperforms all baselines, achieving the lowest FID score—particularly in the more challenging CIFAR-100-LT case. We omit results for the more extreme case of $\mathcal{I} = 0.001$ in this setting. Note that $\mathcal{I} = 0.01$ already represents a highly challenging scenario under a label-free setting. This requires the model to counterbalance a $100{:}1$ disparity between the most and least frequent classes, without access to any class label information.

\begin{table}[t]
\centering
\caption{\textbf{Quantitative evaluation of mode coverage} under the LT$\to$LT setting with $\mathcal{I}=0.01$.
}
\label{tab:cifar_lt2lt_F1}
\scalebox{0.9}{
\begin{tabular}{lcccccc}
    \toprule
    \multirow{2}{*}{Model} & \multicolumn{3}{c}{CIFAR-10-LT} & \multicolumn{3}{c}{CIFAR-100-LT} \\
    \cmidrule(lr){2-4} \cmidrule(lr){5-7}
    & Precision ($\uparrow$) & Recall ($\uparrow$) & F1 ($\uparrow$)
    & Precision ($\uparrow$) & Recall ($\uparrow$) & F1 ($\uparrow$) \\
    \midrule
    I-CFM     & 0.71 & 0.29 & 0.41 & 0.62 & 0.29 & 0.40 \\
    OT-CFM    & \textbf{0.73} & 0.24 & 0.36 & \textbf{0.73} & 0.24 & 0.36 \\
    UOT-CFM   & \textbf{0.73} & 0.29 & 0.42 & \textbf{0.73} & 0.27 & 0.40 \\ \midrule
    ours      & 0.62 & \textbf{0.41} & \textbf{0.49} & 0.69 & \textbf{0.32} & \textbf{0.44} \\
    \bottomrule
\end{tabular}
}
\end{table}

\begin{figure}[t]
    \centering
    \subfloat[I-CFM]{\includegraphics[width=0.3\textwidth]{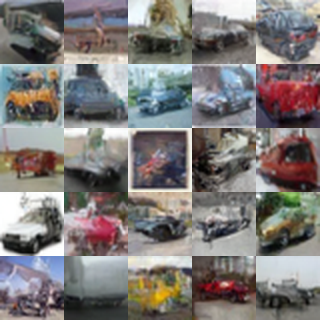}}
    \hfill
    \subfloat[OT-CFM]{\includegraphics[width=0.3\textwidth]{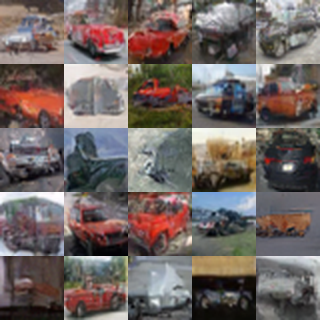}}
    \hfill
    \subfloat[UOT-RFM]{\includegraphics[width=0.3\textwidth]{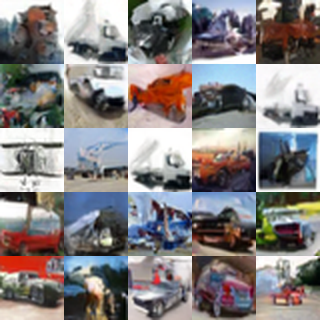}}
    \caption{\textbf{Qualitative comparison of generated \textit{tail} samples} from flow matching models trained on CIFAR-10-LT with $\mathcal{I} = 0.01$. Samples with the highest confidence scores (as predicted by a pretrained classifier) are visualized.
    }
    \label{fig:cifar10lt_tail_generation}
    \vspace{-8pt}
\end{figure}

\paragraph{Addressing Majority Bias}

We further investigate how well each model handles majority bias through a detailed class-wise evaluation.

First, we \textbf{compare the class distribution of generated samples to that of the ground-truth test set}. Since our model is \textit{unconditional} (i.e., it does not take class labels as input), we utilize pretrained classifiers on CIFAR-10 and CIFAR-100 to assign proxy labels to generated samples\footnote{\url{https://github.com/chenyaofo/pytorch-cifar-models}}. Fig. \ref{fig:class_dist_cifar10} shows the generated class distribution on CIFAR-10-LT (see Fig. \ref{fig:classified_gen_dist_cifar100lt} in Appendix for CIFAR-100-LT). Compared to I-CFM and OT-CFM, our UOT-RFM produces a class distribution that more closely matches the ground-truth long-tailed distribution. 
To quantify this, we compute the \textbf{Normalized Class Ratio Error (NCRE)}, defined as the relative deviation between the generated and true class proportions:
\begin{equation}
    \text{NCRE}_i = \frac{|r_{gen, i} - r_{data, i}|}{r_{data, i}},
\end{equation}
where $r_{gen, i}$ denotes the proportion of generated images assigned to class $c_i$ (via proxy labels) and $r_{data, i}$ is the ground-truth class proportion. Since this metric is normalized by the true proportion, misalignment in tail classes is penalized more heavily.
The class-average NCRE scores are 0.84 for I-CFM, 1.02 for OT-CFM, and 0.40 for UOT-RFM (see Figs \ref{fig:norm_diff_cifar10lt} and \ref{fig:norm_diff_zoom} for classwise scores on CIFAR-10-LT and CIFAR-100-LT). These results demonstrate that UOT-RFM most accurately approximates the target class distribution.

Second, we evaluate the models using Precision, Recall, and F1-score \citep{PrecisionAndRecallMetric}, which \textbf{provide explicit measurements of mode coverage and balance in sample generation}. Table \ref{tab:cifar_lt2lt_F1} presents the scores of each flow matching model on CIFAR-10-LT and CIFAR-100-LT with imbalance ratio $\mathcal{I}=0.01$. Across both datasets, UOT-RFM achieves the highest Recall, demonstrating superior coverage of tail modes. While OT-CFM obtains the highest precision metric, UOT-RFM achieves the best F1-score, reflecting a more balanced trade-off between Precision and Recall.

Finally, we assess the \textbf{fidelity of generated samples from tail classes}. Fig. \ref{fig:cifar10lt_tail_generation} shows representative generated images, selected by the highest-confidence predictions of the pretrained classifier. The generated samples from I-CFM and OT-CFM often exhibit noisy artifacts, whereas UOT-RFM produces cleaner, higher-quality images without such artifacts. In addition, we evaluate classwise negative log-likelihood (NLL ($\downarrow$)) to further assess distributional fidelity. Due to space constraints, full results are provided in Appendix (Figs.~\ref{fig:classwise_nll_cifar10lt} and \ref{fig:classwise_nll}). UOT-RFM consistently achieves lower NLL across almost all classes compared to I-CFM and OT-CFM. For example, the mean NLL on CIFAR-10-LT is 3.88 for UOT-RFM, compared to 4.02 and 4.06 for I-CFM and OT-CFM, respectively.

In summary, these experimental results demonstrate that UOT-RFM more effectively addresses the majority bias than existing flow matching models—achieving better alignment with the target class distribution, improved minority class coverage, and higher-fidelity sample generation from tail modes.

\begin{figure}[t!]
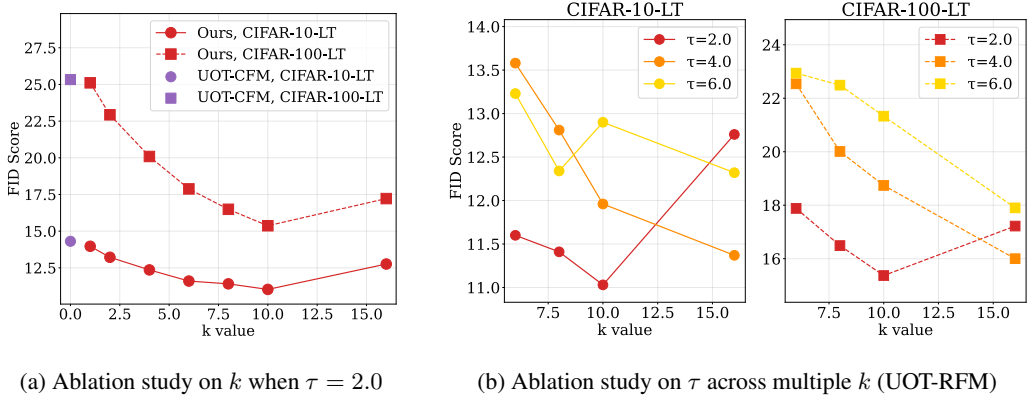

    \centering
    \begin{subfigure}[b]{0.4\textwidth}
        \centering
        \includesvg[width=\textwidth]{figure/ablation_plot/ablation_k_study.svg}
        \caption{Ablation study on $k$ when $\tau=2.0$}
        \label{fig:ablation_plot_k}
    \end{subfigure}
    \hfill
    \begin{subfigure}[b]{0.58\textwidth}
        \centering
        \includesvg[width=\textwidth]{figure/ablation_plot/ablation_combination_study.svg}
        \caption{Ablation study on $\tau$ across multiple $k$ (UOT-RFM)}
        \label{fig:ablation_plot_tau}
    \end{subfigure}
    \caption{\textbf{Ablation studies} on the correction order $k$ and the marginal matching strength $\tau$.}
    \label{fig:ablation_plot}
    \vspace{-8pt}
\end{figure}

\subsection{Evaluation on Standard Balanced Generation} \label{sec:exp_balanced}
To further test the general applicability of our method beyond long-tailed distributions, we evaluate UOT-RFM on the standard balanced benchmarks: CIFAR-10 and CIFAR-100. Table \ref{tab:cifar_balanced} shows the results. As suggested in Theorem \ref{thm:bias_correction}, UOT-RFM with exact correction ($k=1$) can be applied to balanced generative modeling. Our model achieves performance comparable to existing flow matching baselines. This result demonstrates that UOT-RFM is not limited to long-tailed settings but also remains competitive on standard balanced distributions.

\begin{wraptable}{r}{0.5\linewidth}
\centering
\vspace{-12pt}
\caption{\textbf{FID scores ($\downarrow$) on standard balanced benchmarks.}
}
\label{tab:cifar_balanced}
\scalebox{0.9}{
\begin{tabular}{lcc}
    \toprule
    Model & CIFAR-10 & CIFAR-100 \\
    \midrule
    I-CFM     & 3.78 & 6.39 \\
    OT-CFM    & 3.64 & \textbf{6.14} \\
    UOT-CFM   & 3.62 & 6.45 \\
    ours      & \textbf{3.58} & 6.54 \\
    \bottomrule
\end{tabular}
}
\end{wraptable}

\subsection{Ablation Study} \label{sec:exp_abl}
\paragraph{Correction Order $k$}
We conduct an ablation study on the correction order $k$ to examine its impact on performance. Fig. \ref{fig:ablation_plot_k} shows that introducing correction consistently improves FID scores over UOT-CFM (i.e., UOT-RFM without correction). Interestingly, the best FID scores are achieved at $k = 10 \gg 1$, rather than with the exact correction $k=1$. We attribute this to the need for stronger upweighting of minority samples in practice, in order to compensate for the exponentially decaying class distribution under long-tailed settings. Overall, UOT-RFM is robust to correction order $k$ and consistently outperforms other baseline models for all $1 \leq k \leq 16$.

\paragraph{Marginal Matching Intensity $\tau$}
We perform an ablation study on the marginal matching intensity $\tau=\tau_{2}$ in the mini-batch UOT (Eq. \ref{eq:uot}). The parameter  $\tau$ controls the degree to penalize the marginal errors in the UOT problem. Hence, increasing $\tau$ leads to the more closely matched UOT marginal $\pi_{1}^{u}$. Consequently, the majority score becomes close to one, $s_{\tau}(y) \approx 1$, with smaller variance between majority and minority samples. Fig. \ref{fig:ablation_plot_tau} shows how the FID scores change according to the different values of $\tau$. We observe that larger $\tau$ values reduce the sensitivity of UOT-RFM to the correction order $k$, stabilizing performance across different settings. However, the best FID scores for long-tailed generation are achieved at a smaller $\tau$, where the model can better adaptively upweight tail-class samples.

\section{Conclusion}
In this paper, we introduced UOT-Reweighted Flow Matching (UOT-RFM), a flow matching model for long-tailed distributions. By leveraging Unbalanced Optimal Transport, we introduced a label-free majority score to correct majority bias through inverse weighting and higher-order corrections. We theoretically justified our reweighting scheme and demonstrated its practical effectiveness across long-tailed benchmarks, where UOT-RFM achieves superior tail-class fidelity, balanced sample generation, and state-of-the-art performance among flow matching models. 
A limitation of UOT-RFM is that it requires training the model from scratch with the reweighting scheme. In contrast, test-time guidance methods operate on pretrained models and do not require retraining. Moreover, they can be orthogonally combined with UOT-RFM. Exploring how our approach can be extended to test-time controllable generation would be an interesting direction for future work.


\section*{Acknowledgements}
Hyunsoo Song and Minjung Gim was supported by National Institute for Mathematical Sciences (NIMS) grant funded by the Korea government (MSIT) (No. B25810000). Jaewoong was supported by the National Research Foundation of Korea(NRF) grant funded by the Korea government(MSIT) [RS-2024-00349646]. 

\section*{Ethics Statement}
This work introduces UOT-Reweighted Flow Matching (UOT-RFM), a generative modeling framework aimed at mitigating majority bias in flow matching methods when trained on long-tailed data distributions. By more accurately capturing underrepresented (tail) classes, our approach contributes to fairness in generative modeling. All experiments are conducted on publicly available benchmark datasets (CIFAR-10/100 and their long-tailed variants), which do not include sensitive or personal information. Our study does not involve human subjects or raise privacy, security, or legal concerns, and adheres to established standards of research integrity.

\section*{Reproducibility Statement}
To ensure reproducibility, we provide the implementation code in the supplementary material. Detailed descriptions of the training setup, architecture, and hyperparameters are included in Appendix~\ref{sec:implementation_details}. The derivation and complete proof of Theorem~\ref{thm:bias_correction} can be found in Appendix~\ref{sec:app_proof}. All datasets used in our experiments (CIFAR-10/100 and their long-tailed variants) are publicly available.

\bibliography{iclr2026_conference}
\bibliographystyle{iclr2026_conference}

\clearpage
\appendix
\begin{algorithm}[t]
  \caption{Minibatch UOT-Reweighted Flow Matching (UOT-RFM)
  }
  \label{alg:uot_rfm}
\begin{algorithmic}
\State {\bfseries Input:} Empirical or sampleable distributions $\mu,\nu$, bandwidth $\sigma$, batch size $b$, initial network $\vvar_{\theta}$, sinkhorn target marginal weight $\tau$, weight power scale $k$.
\State {\bfseries Initialize:} $\tau_1 \gets \infty$ // Source-fixed UOT
\While{Training}
\State{Sample batches of size $b$ \textit{i.i.d.} from the datasets: }
\State{$\xvar_0 \sim \mu;\, \xvar_1 \sim \nu$}
\State $\pi_{\tau}^{u} \gets \mathrm{UOT}(\xvar_1, \xvar_0, \tau)$ // Source-fixed UOT $\tau_1=\infty$, $\tau_2=\tau$ 
\State $(\xvar_0, \xvar_1) \sim \pi_{\tau}^{u}; \, t \sim \mathcal{U}(0, 1)$ 
\State $\boldsymbol{\mu}_t \gets t \xvar_1 + (1 - t) \xvar_0$
\State $\xvar \sim \mathcal{N}(\mu_t, \sigma^2 I)$
\State Calculate $\hat{s}_{\tau}(\xvar_{1})$ from Equation (\ref{eq:weight_calculation})
\State $\mathcal{L}_{ours}(\theta) \gets \hat{s}_{\tau}(\xvar_{1})^{-k}\| \vvar_\theta(t, \xvar) - (\xvar_1 - \xvar_0)\|^2$
\State $\theta \gets \mathrm{Update}(\theta, \nabla_\theta \mathcal{L}_{ours}(\theta))$
\EndWhile
\State \Return $\vvar_\theta$
\end{algorithmic}
\end{algorithm}

\section{Use of Large Language Models}
We acknowledge the use of a large language model (OpenAI ChatGPT) as a general-purpose writing assistant in the preparation of this work. Specifically, the LLM was used to improve grammar, phrasing, and clarity of the text. The authors take full responsibility for all scientific content presented in this paper.

\section{Unbalanced Optimal Transport}\label{sec:app_uot} 
Recall that the classical OT problem assumes an exact transport between two distributions $\mu$ and $\nu$, i.e., $\pi_{0} = \mu, \pi_{1} = \nu$. 
This exact matching constraint makes OT sensitive to outliers \citep{robust-ot, uot-robust} and vulnerable to class imbalance \citep{eyring2024unbalancedness}. 
To address these issues, the \textit{Unbalanced Optimal Transport (UOT)} problem \citep{uot1, uot2} relaxes the hard marginal constraints by introducing divergence penalties with regularization parameters $\tau_1,\tau_2 > 0$:  
\begin{equation} \label{eq:uot_app}
    C_{UOT}(\mu, \nu) = 
    \inf_{\pi \in \mathcal{M}_+(\mathcal{X}\times\mathcal{Y})} \Biggl[ 
        \int_{\mathcal{X}\times \mathcal{Y}} c(\mathbf{x},\mathbf{y})\, d \pi(\mathbf{x},\mathbf{y})  
        + \tau_1 D_{\Psi_1}(\pi_0 \| \mu) 
        + \tau_2 D_{\Psi_2}(\pi_1 \| \nu) 
    \Biggr], 
\end{equation} 
where $\mathcal{M}_+ (\mathcal{X}\times\mathcal{Y})$ denotes the set of nonnegative finite measures on $\mathcal{X}\times \mathcal{Y}$. 
Here, $D_{\Psi_1}$ and $D_{\Psi_2}$ are $f$-divergences generated by convex functions $\Psi_i$, penalizing discrepancies between the marginals $\pi_0, \pi_1$ and $\mu,\nu$, respectively. Hence, in the UOT problem, the marginals are only \emph{approximately matched} to $\mu$ and $\nu$, in the sense that $\pi_{0}$ and $\pi_{1}$ are close to $\mu$ and $\nu$ with respect to the divergence penalties, rather than being exactly equal as in OT. Intuitively, UOT can be viewed as solving OT between divergence-relaxed marginals $\pi_{0}, \pi_{1}$ and the target measures $\mu, \nu$ \citep{uotm}. 
This relaxation provides robustness to outliers \citep{robust-ot} and improved adaptability to class imbalance between $\mu$ and $\nu$ \citep{eyring2024unbalancedness}.

Similar to the standard OT problem, the UOT problem also admits a \textit{dual formulation} \citep{uotm, semi-dual3, uot-semidual}: 
\begin{equation} \label{eq:uot-dual_app}
    C_{UOT}(\mu, \nu) = 
    \sup_{\substack{u(\mathbf{x})+v(\mathbf{y})\leq c(\mathbf{x},\mathbf{y})}} 
    \Biggl[
        \int_\mathcal{X} -\tau_1 \Psi^*_1\!\left(-\tfrac{1}{\tau_1} u(\mathbf{x})\right)\, d\mu(\mathbf{x}) 
        + \int_\mathcal{Y} -\tau_2 \Psi^*_2\!\left(-\tfrac{1}{\tau_2} v(\mathbf{y})\right)\, d\nu(\mathbf{y})
    \Biggr],
\end{equation}
where $u$ and $v$ are continuous functions on $\mathcal{X}$ and $\mathcal{Y}$. 
Here, $f^{*}$ denotes the \textit{convex conjugate} of $f$, i.e., $f^{*}(z) = \sup_{t \in \mathbb{R}}\{tz - f(t)\}$ for $f:\mathbb{R}\rightarrow [-\infty, \infty]$.  

This dual problem can be further simplified into a \textit{semi-dual} formulation by eliminating $u$ via the optimality condition:
\begin{equation} \label{eq:uot-semi-dual_app}
    C_{UOT}(\mu, \nu) = 
    \sup_{v\in \mathcal{C}(\mathcal{Y})} \Biggl[
        \int_\mathcal{X} -\tau_1 \Psi_1^*\!\left(-\tfrac{1}{\tau_1} v^c(\mathbf{x})\right)\, d\mu(\mathbf{x})
        + \int_\mathcal{Y} -\tau_2 \Psi^*_2\!\left(-\tfrac{1}{\tau_2} v(\mathbf{y})\right)\, d\nu (\mathbf{y})
    \Biggr],
\end{equation}
where the $c$-transform of $v$ is defined as 
\[
v^c(\mathbf{x}) = \underset{\mathbf{y}\in \mathcal{Y}}{\inf}\,\big(c(\mathbf{x},\mathbf{y}) - v(\mathbf{y})\big).
\]
Here, $v^c$ corresponds to the optimal potential $u$ given $v$.  

Finally, the relationship between the marginals of the optimal UOT plan $\pi^{u}$ and the original source and target distributions can be expressed using the optimal UOT potential $v$ from the semi-dual problem:   
\begin{theorem}[\citep{uotm, semi-dual3, uot-semidual}] \label{thm:uot-ot-relation}
Let $v$ be a solution of the dual formulation of the UOT problem between the source distribution $\mu$ and the target distribution $\nu$. Then, the marginal distributions of the optimal UOT plan $\pi^{u}$ satisfy
\begin{equation}
	d \pi^{u}_{0}(\mathbf{x}) \;=\; (\Psi_1^*)'\!\left(-\tfrac{1}{\tau_1} v^c(\mathbf{x})\right)\, d \mu(\mathbf{x})
	\quad \text{and} \quad
	d \pi^{u}_{1}(\mathbf{y}) \;=\; (\Psi_2^*)'\!\left(-\tfrac{1}{\tau_2} v(\mathbf{y})\right)\, d \nu(\mathbf{y}).
\end{equation}
\end{theorem}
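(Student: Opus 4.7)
The plan is to derive the marginal formulas from the primal--dual optimality conditions of the UOT problem, exploiting the fact that the two $f$-divergence terms are decoupled in $\pi_0$ and $\pi_1$ and that the transport cost couples them only through the marginal constraints. First I would rewrite the primal (Eq.~\ref{eq:uot_app}) as a constrained problem over the triple $(\pi,\pi_0,\pi_1) \in \mathcal{M}_+(\mathcal{X}\times\mathcal{Y})\times \mathcal{M}_+(\mathcal{X}) \times \mathcal{M}_+(\mathcal{Y})$ with the explicit marginal constraints $\pi_0 = P^{\mathcal{X}}_\# \pi$ and $\pi_1 = P^{\mathcal{Y}}_\# \pi$, and introduce continuous Lagrange multipliers $u \in \mathcal{C}(\mathcal{X})$ and $v \in \mathcal{C}(\mathcal{Y})$ for these two constraints. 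Under standard qualification assumptions (lower semicontinuity of $\Psi_i$, superlinearity of $c$, and tightness), strong duality holds and yields exactly the dual (Eq.~\ref{eq:uot-dual_app}) after performing the inner minimization in $\pi,\pi_0,\pi_1$.

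Second, I would show that the inner minimization in $\pi_0$ is a Fenchel--Young problem. For fixed $u$, the Lagrangian contribution depending on $\pi_0$ is
\begin{equation}
\inf_{\pi_0 \in \mathcal{M}_+(\mathcal{X})} \Bigl[\, \tau_1 D_{\Psi_1}(\pi_0 \,\|\, \mu) + \int_{\mathcal{X}} u(\mathbf{x})\, d\pi_0(\mathbf{x})\,\Bigr] \;=\; -\int_{\mathcal{X}} \tau_1 \Psi_1^{*}\!\Bigl(-\tfrac{1}{\tau_1} u(\mathbf{x})\Bigr)\, d\mu(\mathbf{x}),
\end{equation}
which follows from the pointwise Legendre transform identity $\inf_{r \ge 0}\, [\tau_1 \Psi_1(r) + u\, r] = -\tau_1 \Psi_1^{*}(-u/\tau_1)$ applied to the Radon--Nikodym density $r(\mathbf{x}) = d\pi_0/d\mu(\mathbf{x})$. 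The first-order optimality condition in $r$ (Fenchel's equality case) is $r^{*}(\mathbf{x}) = (\Psi_1^{*})'(-u(\mathbf{x})/\tau_1)$, which gives the pointwise density formula
\begin{equation}
\frac{d\pi_0^{u}}{d\mu}(\mathbf{x}) \;=\; (\Psi_1^{*})'\!\Bigl(-\tfrac{1}{\tau_1} u(\mathbf{x})\Bigr).
\end{equation}
The analogous computation in the $\pi_1$ variable yields $d\pi_1^{u}/d\nu(\mathbf{y}) = (\Psi_2^{*})'(-v(\mathbf{y})/\tau_2)$.

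Third, I would specialize $u$ to the optimal dual potential, which by the inner $c$-transform argument (the same reduction that turns Eq.~\ref{eq:uot-dual_app} into Eq.~\ref{eq:uot-semi-dual_app}) equals $v^{c}$ at optimality. Indeed, the inner minimization over $\pi$ forces the support of the optimal $\pi^{u}$ to lie in the set $\{(\mathbf{x},\mathbf{y}) : u(\mathbf{x}) + v(\mathbf{y}) = c(\mathbf{x},\mathbf{y})\}$, and the supremum over $u$ under $u \le c - v$ is attained by $u = v^{c}$. Substituting $u(\mathbf{x}) = v^{c}(\mathbf{x})$ into the density formula for $\pi_0^{u}$ yields the claimed identity. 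The marginal identity for $\pi_1^{u}$ already involves $v$ directly.

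The main obstacle will be justifying the pointwise first-order condition rigorously: one must ensure $\Psi_1^{*}$ is differentiable (or interpret $(\Psi_i^{*})'$ as a selection from the subdifferential) on the relevant range of $-u/\tau_1$, handle the boundary case where $(\Psi_i^{*})'$ vanishes (so that $\pi_i^{u}$ may fail to be equivalent to $\mu$ or $\nu$), and verify that the measurable selection $r^{*}$ is admissible as a Radon--Nikodym derivative. A secondary technical point is confirming existence of an optimal $v \in \mathcal{C}(\mathcal{Y})$; this is where the cited references \citep{uotm, semi-dual3, uot-semidual} provide the needed compactness and regularity arguments that I would invoke rather than reprove.
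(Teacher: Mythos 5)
The paper itself gives no proof of this statement: Theorem~\ref{thm:uot-ot-relation} is imported verbatim from the cited references \citep{uotm, semi-dual3, uot-semidual}, and the appendix only states it as background for Theorem~\ref{thm:bias_correction}. Your blind attempt therefore cannot match "the paper's proof"; what it does is reconstruct, correctly, the standard Fenchel--Rockafellar argument that those references use. The mechanics check out: decoupling the primal via multipliers for $\pi_0 = P^{\mathcal{X}}_{\#}\pi$, $\pi_1 = P^{\mathcal{Y}}_{\#}\pi$ makes the $\pi$-block enforce dual feasibility $u \oplus v \le c$ (with complementary slackness giving $u+v=c$ on $\mathrm{supp}\,\pi^{u}$), and your pointwise identity $\inf_{r\ge 0}[\tau_1\Psi_1(r)+ur] = -\tau_1\Psi_1^{*}(-u/\tau_1)$ together with the Fenchel equality case $r^{*} \in \partial\Psi_1^{*}(-u/\tau_1)$ is exactly the source of the density formulas; passing to $u = v^{c}$ is legitimate because $z \mapsto -\tau_i\Psi_i^{*}(-z/\tau_i)$ is nondecreasing (as $\Psi_i \equiv +\infty$ on $(-\infty,0)$ makes $\Psi_i^{*}$ nondecreasing), so the largest feasible $u$ is optimal. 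Two points deserve being made explicit rather than implicit: (i) the density formula for the \emph{optimal plan's} marginal, not merely for the minimizer of the decoupled scalar problem, requires the full complementary-slackness argument at a primal--dual optimal pair, i.e., equality throughout the weak-duality chain, which in turn presupposes strong duality and attainment; and (ii) $\pi_0^{u}\ll\mu$, $\pi_1^{u}\ll\nu$ must hold (automatic when $\Psi_i$ is superlinear, e.g.\ KL, since otherwise the divergence is $+\infty$, but requiring recession-function bookkeeping for general $\Psi_i$). You flag both, along with differentiability of $\Psi_i^{*}$ versus subdifferential selection and existence of an optimal $v$, and defer them to the cited works — which is precisely what the paper does for the entire theorem, so your sketch is an acceptable and essentially standard derivation of the cited result.
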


\section{Proofs of theorem}\label{sec:app_proof} 
In this section, we provide the proof of our bias correction theorem (Theorem \ref{thm:bias_correction}) from the main text. Our proof builds on three key lemmas for the standard flow matching model, originally established in~\cite{tong2024improving, flowmatching}, which we restate here for completeness. 

\begin{lemma}[\cite{tong2024improving}, Theorem 3.1] \label{lem:marginal_path}
    The marginal vector field $\uvar_{t}$ generates the probability path $p_{t} (\xvar_{t})$ from initial conditions $p_{0}(\xvar_{0})$.
    \begin{equation} 
    p_{t} (\xvar_{t}) = \int p_{t}( \xvar_{t} \mid \zvar) \pi(\zvar) d \zvar, \quad
    \uvar_{t} (\xvar_{t}) := \mathbb{E}_{\pi(\zvar)} \left[\frac{\uvar_{t} (\xvar \mid \zvar) p_{t}(\xvar \mid \zvar)} {p_{t}(\xvar)} \right]
    = \mathbb{E}_{p_{t}(\zvar | \xvar_{t})} \left[ \uvar_{t} (\xvar_{t} \mid \zvar) \right]
    \end{equation}
\end{lemma}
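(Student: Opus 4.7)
The plan is to prove that $\uvar_t$ generates $p_t$ by verifying the continuity equation $\partial_t p_t + \nabla\cdot(p_t\,\uvar_t) = 0$, which (together with the correct initial marginal) is the standard characterization of a vector field generating a probability path. The key ingredient I would take as given is the conditional version, which follows directly from the explicit construction in Eq.~\ref{eq:cfm_conditional}: for each fixed $\zvar$, the conditional field $\uvar_t(\cdot\mid\zvar)$ generates $p_t(\cdot\mid\zvar)$, i.e., $\partial_t p_t(\xvar\mid\zvar) + \nabla_{\xvar}\cdot(p_t(\xvar\mid\zvar)\,\uvar_t(\xvar\mid\zvar)) = 0$.

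First I would differentiate the marginal definition $p_t(\xvar) = \int p_t(\xvar\mid\zvar)\,\pi(\zvar)\,d\zvar$ in $t$ under the integral sign, and substitute the conditional continuity equation to get
\[
\partial_t p_t(\xvar) \;=\; -\int \nabla_{\xvar}\!\cdot\!\bigl(p_t(\xvar\mid\zvar)\,\uvar_t(\xvar\mid\zvar)\bigr)\,\pi(\zvar)\,d\zvar.
\]
Next I would interchange the divergence operator with the $\zvar$-integral and recognize the resulting flux as exactly what appears in the first displayed form of $\uvar_t(\xvar)$: multiplying and dividing by $p_t(\xvar)$ yields
\[
\int p_t(\xvar\mid\zvar)\,\uvar_t(\xvar\mid\zvar)\,\pi(\zvar)\,d\zvar \;=\; p_t(\xvar)\cdot \mathbb{E}_{\pi(\zvar)}\!\left[\tfrac{\uvar_t(\xvar\mid\zvar)\,p_t(\xvar\mid\zvar)}{p_t(\xvar)}\right] \;=\; p_t(\xvar)\,\uvar_t(\xvar).
\]
Chaining the two displays gives the marginal continuity equation $\partial_t p_t + \nabla\cdot(p_t\,\uvar_t)=0$, which is the claim. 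The matching initial condition is immediate from the marginalization formula at $t=0$, and the equivalence between the $\pi(\zvar)$-expectation and the $p_t(\zvar\mid\xvar)$-expectation in the second form of $\uvar_t$ reduces to a one-line Bayes-rule check using $p_t(\zvar\mid\xvar) = p_t(\xvar\mid\zvar)\pi(\zvar)/p_t(\xvar)$.

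The main obstacle is technical rather than conceptual: justifying the two interchanges of integration with differentiation (once in $t$, once in $\xvar$). This needs regularity assumptions that make dominated convergence and Leibniz's rule applicable, e.g.\ that $p_t(\xvar\mid\zvar)$ is $C^1$ in $(t,\xvar)$, that $\uvar_t(\xvar\mid\zvar)$ is $C^1$ in $\xvar$ with $\zvar$-uniform bounds on compact sets, and that there exist integrable envelopes dominating the relevant derivatives so that $\int|\nabla_\xvar\cdot(p_t\uvar_t)|\pi(\zvar)d\zvar<\infty$. For the Gaussian conditional path with bandwidth $\sigma>0$ used in Eq.~\ref{eq:cfm_conditional} and the affine conditional field $\uvar_t(\xvar\mid\zvar)=\xvar_1-\xvar_0$, these conditions hold straightforwardly because the conditional density and its derivatives decay rapidly in $\xvar$ uniformly over bounded $\zvar$-regions, so all integrals converge absolutely and the interchanges are legitimate. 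A clean way to package this is to assume (as in \citet{tong2024improving,flowmatching}) that $p_t$ is bounded away from zero on compact sets and that $(p_t\uvar_t)(\xvar)$ vanishes sufficiently fast as $\|\xvar\|\to\infty$, which is exactly the setting used elsewhere in the paper.
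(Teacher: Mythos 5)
Your proposal is correct and follows essentially the same route as the cited source (the paper itself only restates this result from \citet{tong2024improving}, Theorem 3.1, whose proof is precisely the continuity-equation argument you give): differentiate the marginal under the integral, insert the conditional continuity equation, swap divergence and integration, and identify the flux $\int p_t(\xvar\mid\zvar)\uvar_t(\xvar\mid\zvar)\pi(\zvar)\,d\zvar$ with $p_t(\xvar)\uvar_t(\xvar)$, with the same regularity/decay caveats used there to justify the interchanges. No gap; nothing further is needed.
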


\begin{lemma}[\cite{tong2024improving}, Theorem 3.2] \label{lem:fm_cfm_equiv}
    If $p_t(\xvar_{t}) > 0$ for all $\xvar_{t} \in \mathbb{R}^d$ and $t \in [0,1]$, then, up to a constant independent of $\theta$, $\mathcal{L}_{\mathrm{CFM}}$ (Eq. \ref{eq:cfm_loss}) and $\mathcal{L}_{\mathrm{FM}}$ (Eq.\ref{eq:fm_loss}) are equal, and hence
    \begin{equation} 
    \nabla_{\theta} \mathcal{L}_{\mathrm{FM}}(\theta) = \nabla_{\theta} \mathcal{L}_{\mathrm{CFM}}(\theta).
    \end{equation}
\end{lemma}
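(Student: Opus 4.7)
The plan is to rewrite the reweighted objective $\mathcal{L}_{\mathrm{ours},k}$ as an ordinary (unweighted) conditional flow matching loss under a \emph{tilted coupling}, and then identify its terminal marginal using the standard flow-matching machinery (Lemmas~\ref{lem:marginal_path} and \ref{lem:fm_cfm_equiv}). First I would introduce the normalization $Z_k := \int s_\tau(\xvar_1)^{-k}\,d\pi_\tau^u(\zvar)$, which by the defining identity $d\pi_{\tau,1}^u = s_\tau\,d\nu$ simplifies to $\int s_\tau^{-(k-1)}\,d\nu$; in particular $Z_1=1$, and the hypothesis $\nu \ll \pi_{\tau,1}^u$ ensures $s_\tau>0$ $\nu$-a.e., so both $Z_k$ and the ratio $s_\tau^{-k}$ are well-defined. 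Then set $d\tilde{\pi}_k(\zvar) := Z_k^{-1}\,s_\tau(\xvar_1)^{-k}\,d\pi_\tau^u(\zvar)$, which is a probability measure on $(\xvar_0,\xvar_1)$.

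The first key step is the importance-sampling identity
\[ \mathcal{L}_{\mathrm{ours},k}(\vtheta) \;=\; Z_k\cdot \E_{t \sim \mathcal{U}[0,1],\, \zvar \sim \tilde{\pi}_k,\, \xvar_t \sim p_t(\xvar_t|\zvar)}\bigl[\|\vvar_\theta(t,\xvar_t) - \uvar_{t|\zvar}(\xvar_t|\zvar)\|_2^2\bigr], \]
which, up to the $\vtheta$-independent factor $Z_k$, coincides with the standard CFM loss of Eq.~\ref{eq:cfm_loss} under the coupling $\tilde{\pi}_k$. Applying Lemma~\ref{lem:fm_cfm_equiv} to the coupling $\tilde{\pi}_k$ shows that the minimizer of $\mathcal{L}_{\mathrm{ours},k}$ coincides with the marginal flow-matching optimum: the marginal vector field $\tilde{\uvar}_t$ generating the probability path $\tilde{p}_t(\xvar_t) := \int p_t(\xvar_t|\zvar)\,d\tilde{\pi}_k(\zvar)$. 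Lemma~\ref{lem:marginal_path} then guarantees that the ODE driven by the learned $\vvar_\theta \approx \tilde{\uvar}_t$ reproduces the entire path $\tilde{p}_t$.

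It then suffices to read off the terminal marginal. In the limit $\sigma \to 0$ the conditional $p_1(\cdot|\zvar) = \mathcal{N}(\xvar_1,\sigma^2 I)$ concentrates on $\xvar_1$ and
\[ \tilde{p}_1(\xvar_1) \;=\; Z_k^{-1}\,s_\tau(\xvar_1)^{-k}\,d\pi_{\tau,1}^u(\xvar_1) \;=\; Z_k^{-1}\,s_\tau(\xvar_1)^{-(k-1)}\,d\nu(\xvar_1). \]
This delivers all three claims simultaneously: $k=0$ (no correction) recovers the biased UOT-CFM marginal $p_1 = \pi_{\tau,1}^u$; $k=1$ gives $Z_1=1$ and therefore $p_1 = \nu$ exactly, the asserted first-order correction; and general $k\ge 1$ gives $p_1 \propto s_\tau^{-k}\pi_{\tau,1}^u = s_\tau^{-(k-1)}\nu$, matching the "more generally" part of the statement.

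The main obstacle is twofold. The easier technical point is making the $\sigma \to 0$ collapse rigorous, which follows the standard CFM contracting-Gaussian argument at the endpoint $t=1$ (alternatively, one works with the $\sigma>0$ mollified marginals and notes that the identification is preserved up to convolution). The more delicate point is that the reweighting $s_\tau(\xvar_1)^{-k}$ acts only on the target side, so the tilted source marginal $\tilde{\pi}_{k,0}$ is not manifestly equal to $\mu$; it is precisely here that the source-fixed condition $\pi_{\tau,0}^u = \mu$ is needed, together with the interaction between $s_\tau(\xvar_1)$ and the UOT conditional $\pi_\tau^u(\xvar_1|\xvar_0)$, to ensure that the inference-time initial distribution $\mu$ remains consistent with $\tilde{p}_0$. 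I would address this explicitly so that the assertion "generates $p_1$" refers to samples initialized from the true source $\mu$ and not merely from $\tilde{p}_0$.
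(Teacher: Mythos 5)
Your proposal does not address the statement it is meant to prove. The statement here is Lemma~\ref{lem:fm_cfm_equiv} (Tong et al., Theorem 3.2): under the positivity assumption $p_t(\xvar_t)>0$, the losses $\mathcal{L}_{\mathrm{FM}}$ and $\mathcal{L}_{\mathrm{CFM}}$ differ only by a constant independent of $\vtheta$, hence $\nabla_\vtheta\mathcal{L}_{\mathrm{FM}}=\nabla_\vtheta\mathcal{L}_{\mathrm{CFM}}$. What you have written instead is an argument for Theorem~\ref{thm:bias_correction}: you tilt the UOT coupling by $s_\tau^{-k}$, absorb the weight into a new coupling $\tilde\pi_k$, and identify the terminal marginal $s_\tau^{-(k-1)}\nu$. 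That is the paper's main bias-correction result, not the FM/CFM equivalence lemma. Worse, your argument explicitly invokes Lemma~\ref{lem:fm_cfm_equiv} as one of its ingredients ("Applying Lemma~\ref{lem:fm_cfm_equiv} to the coupling $\tilde\pi_k$\dots"), so read as a proof of that lemma it is circular: you assume exactly what is to be shown. The hypotheses and conclusion of the lemma (positivity of $p_t$, equality of gradients) play no role anywhere in your text, which is a clear sign the target was misidentified.

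A correct proof of the lemma is the standard bilinear-expansion argument and needs none of the UOT machinery. Expand both objectives as $\|\vvar_\theta\|_2^2-2\langle\vvar_\theta,\cdot\rangle+\|\cdot\|_2^2$. The $\E\|\vvar_\theta(t,\xvar_t)\|_2^2$ terms coincide because sampling $\zvar\sim\pi$, $\xvar_t\sim p_t(\cdot\mid\zvar)$ and then discarding $\zvar$ yields exactly $\xvar_t\sim p_t$ (the marginalization in Eq.~\ref{eq:cfm_marginal}). The cross terms coincide by the tower property together with the definition of the marginal field, $\uvar_t(\xvar_t)=\E_{p_t(\zvar\mid\xvar_t)}[\uvar_t(\xvar_t\mid\zvar)]$, which is precisely where the assumption $p_t(\xvar_t)>0$ enters: it guarantees the conditional $p_t(\zvar\mid\xvar_t)=p_t(\xvar_t\mid\zvar)\pi(\zvar)/p_t(\xvar_t)$ is well defined so that the inner expectation can be pulled through the (in $\uvar$-linear) inner product with $\vvar_\theta(t,\xvar_t)$. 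Finally, the terms $\E\|\uvar_t(\xvar_t)\|_2^2$ and $\E\|\uvar_t(\xvar_t\mid\zvar)\|_2^2$ contain no $\vtheta$, so the two losses agree up to a $\vtheta$-independent constant and their gradients are equal. If you wish to keep your tilted-coupling argument, it belongs to the proof of Theorem~\ref{thm:bias_correction}, where it is essentially the route the paper takes; but the present lemma must be established first and independently, since that theorem's proof relies on it.
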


\begin{lemma}[\cite{tong2024improving}, Proposition 3.4] \label{lem:fm_recover_marginals}
    Let the initial sample coupling be $\pi(\zvar_{0}, \zvar_{1})$ and define the conditional vector probability path and vector field as in Eq. \ref{eq:cfm_conditional}. Then, the corresponding marginal probability path $p_{t}(\xvar_{t})$ satisfies the boundary conditions $p_{0} = \pi_{0}*\mathcal{N}(\xvar| 0, \sigma^{2} I)$ and $p_{1} = \pi_{1}*\mathcal{N}(\xvar| 0, \sigma^{2} I)$, where $*$ denotes the convolution operator.  Furthermore, assuming regularity 
    properties of $q_{0}, q_{1}$, and the optimal transport plan $\pi$, as $\sigma^{2} \to 0$, the marginal path $p_{t}$ and field $u_{t}$ minimize (7), i.e., $\uvar_{t}$ solves the dynamic optimal transport problem between $\pi_{0}$ and $\pi_{1}$. Specifically, $p_{0} \to \pi_{0}$ and $p_{1} \to \pi_{1}$ as $\sigma \to 0 $.
\end{lemma}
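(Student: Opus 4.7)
The plan is to prove the two parts of Lemma~\ref{lem:fm_recover_marginals} separately using the explicit Gaussian conditional path of Eq.~\ref{eq:cfm_conditional}.

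For the boundary conditions, I would compute directly. Setting $t=0$ in $p_t(\xvar_t \mid \zvar) = \mathcal{N}(\xvar_t \mid t\xvar_1 + (1-t)\xvar_0, \sigma^2 I)$ gives a Gaussian centered at $\xvar_0$; integrating against $\pi(\xvar_0, \xvar_1)$ and marginalizing out $\xvar_1$ yields
\[
p_0(\xvar) = \int \mathcal{N}(\xvar \mid \xvar_0, \sigma^2 I)\, d\pi_0(\xvar_0) = \bigl(\pi_0 * \mathcal{N}(\cdot \mid 0, \sigma^2 I)\bigr)(\xvar),
\]
and the analogous computation at $t=1$ gives the stated expression for $p_1$. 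Since the Gaussian mollifier of variance $\sigma^2 I$ converges weakly to $\delta_0$ as $\sigma \to 0$, standard mollifier theory gives $p_0 \to \pi_0$ and $p_1 \to \pi_1$ (in $L^1$ under absolute continuity of the marginals).

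For the dynamic OT claim, I would invoke the Benamou--Brenier formulation: the quadratic OT cost equals the infimum of $\int_0^1 \!\! \int \tfrac{1}{2}\|w_t(\xvar)\|^2 \rho_t(\xvar)\, d\xvar\, dt$ over pairs $(\rho_t, w_t)$ satisfying the continuity equation with endpoints $\pi_0, \pi_1$. The strategy is to upper-bound the kinetic energy of $(p_t, \uvar_t)$ by the Kantorovich cost of $\pi$ and to show that this bound is tight in the $\sigma \to 0$ limit when $\pi$ is optimal. Using $\uvar_t(\xvar \mid \zvar) = \xvar_1 - \xvar_0$ together with Lemma~\ref{lem:marginal_path} and Jensen's inequality,
\[
\|\uvar_t(\xvar)\|^2 = \bigl\|\mathbb{E}_{p_t(\zvar \mid \xvar)}[\xvar_1 - \xvar_0]\bigr\|^2 \;\le\; \mathbb{E}_{p_t(\zvar \mid \xvar)}\|\xvar_1 - \xvar_0\|^2,
\]
and integrating against $p_t(\xvar)$ and then in $t$ bounds the total kinetic energy by $\tfrac{1}{2}\,\mathbb{E}_{\pi}\|\xvar_1 - \xvar_0\|^2$, which coincides with $C_{\mathrm{OT}}(\pi_0, \pi_1)$ when $\pi$ is optimal. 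Combined with the boundary convergence above, the pair $(p_t, \uvar_t)$ is admissible in the Benamou--Brenier problem with the correct endpoints, so this bound forces $(p_t, \uvar_t)$ to be minimizing in the limit.

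The main obstacle is rigorously handling the $\sigma \to 0$ limit in the Jensen step: for finite $\sigma$, many conditional paths pass through the same $\xvar$, creating slack so that $(p_t, \uvar_t)$ is only approximately minimizing. Under Brenier-type regularity (absolute continuity of $\pi_0$ and existence of a deterministic transport map $T$ supporting the optimal $\pi$), as $\sigma \to 0$ the posterior $p_t(\zvar \mid \xvar)$ should concentrate on a single pair $(\xvar_0, \xvar_1)$ for each $\xvar$, eliminating the slack and turning Jensen into an equality. Making this $\Gamma$-convergence-style argument rigorous requires controlling the disintegration of $p_t$ along the straight-line trajectories $t\xvar_1 + (1-t)\xvar_0$, verifying that the limiting vector field is well defined almost everywhere, and checking that it solves the continuity equation with endpoints $\pi_0, \pi_1$; this is the technical heart of the proof and is where the unspecified regularity hypotheses on $\pi_0, \pi_1, \pi$ enter.
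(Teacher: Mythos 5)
The paper itself contains no proof of this lemma: it is imported verbatim from \citet{tong2024improving} (their Proposition 3.4), so the only comparison available is with the cited original argument, and your sketch follows essentially that route. The boundary part is complete and correct: evaluating the Gaussian conditional path at $t=0,1$, marginalizing over $\pi$, and taking the mollifier limit gives exactly $p_0=\pi_0*\mathcal{N}(0,\sigma^2 I)$, $p_1=\pi_1*\mathcal{N}(0,\sigma^2 I)$ and their weak convergence to $\pi_0,\pi_1$. The dynamic-OT part also uses the right device (Benamou--Brenier plus the Jensen bound $\|\uvar_t(\xvar)\|^2\le\mathbb{E}_{p_t(\zvar\mid\xvar)}\|\xvar_1-\xvar_0\|^2$, whose integral against $p_t$ and $t$ is $\mathbb{E}_{\pi}\|\xvar_1-\xvar_0\|^2=W_2^2(\pi_0,\pi_1)$ when $\pi$ is optimal). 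One remark: the obstacle you single out as the technical heart---showing that $p_t(\zvar\mid\xvar)$ concentrates so that Jensen becomes an equality---is not actually needed to close the argument. For every $\sigma>0$ the pair $(p_t,\uvar_t)$ is admissible in the continuity equation between the mollified endpoints, so Benamou--Brenier already gives the lower bound $\tfrac12 W_2^2\bigl(\pi_0*\mathcal{N}_\sigma,\pi_1*\mathcal{N}_\sigma\bigr)$ on its kinetic energy; since Gaussian mollification is continuous in $W_2$, this lower bound converges to the Jensen upper bound $\tfrac12 W_2^2(\pi_0,\pi_1)$ as $\sigma\to0$, which sandwiches the kinetic energy at the optimal value without any control of the disintegration. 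The identification of the limiting path is then immediate: as $\sigma\to0$ each conditional Gaussian tends to a Dirac at $t\xvar_1+(1-t)\xvar_0$, so $p_t\to\bigl((1-t)\xvar_0+t\xvar_1\bigr)_{\#}\pi$, the McCann displacement interpolation of the optimal plan, which is the known minimizer of the dynamic problem. With that simplification your plan is a faithful reconstruction of the cited proof.
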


Here, we provide a formal statement of Theorem \ref{thm:bias_correction} and provide its proof.
\begin{theorem}[Theorem \ref{thm:bias_correction}] \label{thm:bias_correection_app}
    Let $\pi_{\tau}^{u}$ be the optimal source-fixed UOT coupling between $\mu$ and $\nu$ with $\tau_{2} = \tau > 0 $ and assume that its target marginal satisfies $\nu \ll \pi_{\tau}^{u}$. Training a flow matching model with $\pi_{\tau}^{u}$ yields the biased distribution $p_{1} = \pi^{u}_{1} \neq \nu$ \cite{eyring2024unbalancedness}. However, applying the first-order correction (our method with $k=1$) recovers the true target distribution $\nu$.
    \begin{equation} \label{eq:uot_wfm_k_1}
    \mathcal{L}_{\mathrm{ours, k=1}}(\vtheta) = \E_{t \sim \mathcal{U}[0,1], \zvar \sim \pi_{\tau}^{u}(\zvar), \xvar_{t} \sim p_{t}(\xvar_{t}|\zvar) } \,\, \left[
    s_{\tau}(\xvar_{1})^{-1}\| \vvar_{\theta}(t, \xvar_{t}) - \uvar_{t|\zvar}(\xvar_{t}|\zvar) \|_{2}^{2} \right]. 
    \end{equation}
    where the majority score $s_{\tau}(\cdot)$ is defined as $s_{\tau}(\cdot) := \frac{d \pi^{u}_{1}}{d \nu }(\cdot)$. More generally, UOT-RFM with correction order $k$ generates a distribution $p_1 \propto s_{\tau}^{-k} \pi^{u}_{\tau, 1}=s_{\tau}^{-(k-1)} \nu$.
\end{theorem}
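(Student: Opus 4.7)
The plan is to recast the weighted loss $\mathcal{L}_{\mathrm{ours},k}$ as a standard (unweighted) conditional flow matching loss with respect to a \emph{reweighted coupling}, and then invoke the standard flow matching marginal recovery results (Lemmas~\ref{lem:marginal_path}, \ref{lem:fm_cfm_equiv}, \ref{lem:fm_recover_marginals}) to read off the distribution generated at time $t=1$. Concretely, I would introduce the auxiliary measure
\[
\tilde{\pi}^{(k)}(\xvar_0, \xvar_1) \;:=\; s_\tau(\xvar_1)^{-k}\,\pi^{u}_\tau(\xvar_0, \xvar_1), \qquad Z_k \;:=\; \int \tilde{\pi}^{(k)},
\]
and set $\pi^{(k)} := \tilde{\pi}^{(k)}/Z_k$. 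The hypothesis $\nu \ll \pi^{u}_{\tau,1}$ guarantees $s_\tau > 0$ $\pi^u_{\tau,1}$-a.e., so $\pi^{(k)}$ is a well-defined probability measure.

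The first substantive step is a direct computation of the target marginal of $\pi^{(k)}$: using $s_\tau = d\pi^{u}_{\tau,1}/d\nu$,
\[
\pi^{(k)}_1(\xvar_1) \;=\; Z_k^{-1}\, s_\tau(\xvar_1)^{-k}\, \pi^{u}_{\tau,1}(\xvar_1) \;=\; Z_k^{-1}\, s_\tau(\xvar_1)^{-(k-1)}\,\nu(\xvar_1).
\]
Specialising to $k=1$ gives $Z_1 = \int s_\tau^{-1}\, d\pi^{u}_{\tau,1} = \int d\nu = 1$ and hence $\pi^{(1)}_1 = \nu$; for general $k$ one obtains $\pi^{(k)}_1 \propto s_\tau^{-(k-1)}\nu$, matching the proportionality in the statement. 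I would also record here that the $k=0$ case recovers the bias $p_1 = \pi^u_{\tau,1}\neq \nu$ asserted for plain UOT-CFM.

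The second step is to observe that, since $s_\tau(\xvar_1)^{-k}$ does not depend on $\theta$,
\[
\mathcal{L}_{\mathrm{ours},k}(\theta) \;=\; Z_k\cdot \mathcal{L}_{\mathrm{CFM}}\bigl(\theta;\pi^{(k)}\bigr),
\]
so the two losses share the same stationary points. Applying Lemma~\ref{lem:fm_cfm_equiv} with the coupling $\pi^{(k)}$ identifies the optimiser $\vvar_{\theta^\star}$ with the marginal vector field of the probability path induced by $\pi^{(k)}$ (Lemma~\ref{lem:marginal_path}). Then Lemma~\ref{lem:fm_recover_marginals} yields that, as $\sigma\to 0$, the $t=1$ marginal of this path converges to $\pi^{(k)}_1$, giving $p_1 = \nu$ for $k=1$ and $p_1 \propto s_\tau^{-(k-1)}\nu$ in general.

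The main obstacle, and the place where care is needed, is the source side of the reweighted coupling: because the weight depends only on $\xvar_1$, the source marginal $\pi^{(k)}_0(\xvar_0) = \mu(\xvar_0)\cdot\mathbb{E}_{\xvar_1\sim \pi^u_\tau(\cdot\mid \xvar_0)}[s_\tau(\xvar_1)^{-k}]/Z_k$ is in general a reweighting of $\mu$ rather than $\mu$ itself. This is precisely why the source-fixed assumption ($\tau_1=\infty$, so that $\pi^u_{\tau,0}=\mu$) is invoked: it ensures that $\pi^{(k)}_0$ remains absolutely continuous with respect to the Gaussian prior and, combined with the identification of endpoints in Lemma~\ref{lem:fm_recover_marginals}, justifies reading the terminal marginal of the learned probability path as $\pi^{(k)}_1$. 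The absolute-continuity hypothesis $\nu \ll \pi^u_{\tau,1}$ plays the dual role of keeping $s_\tau^{-k}$ finite on the support of the coupling, so that the reweighting does not introduce singular behaviour on tail modes.
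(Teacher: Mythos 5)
Your proposal is correct and takes essentially the same route as the paper's proof: you absorb the weight $s_{\tau}^{-k}$ into the coupling, observe that the reweighted coupling has target marginal $\nu$ for $k=1$ (and $\propto s_{\tau}^{-(k-1)}\nu$ in general), and then invoke the standard CFM lemmas to identify $p_1$ in the $\sigma \to 0$ limit, which is exactly the paper's argument. The source-marginal subtlety you flag (the reweighted coupling's $t=0$ marginal is generally a reweighting of $\mu$) is a genuine point, but the paper handles it at the same informal level, simply appealing to the source-fixed condition $\pi^{u}_{\tau,0}=\mu$ without tracking the reweighted source marginal.
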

\begin{proof}
    As an overview, the proof relies on two observations: (1) training with $\pi^{u}$ yields $p_{1} = \pi^{u}_{1}$, i.e., the biased UOT marginal (Theorem~\ref{thm:uot-ot-relation}) and (2) importance reweighting with $s_{\tau}^{-1}$ corrects this bias, since $\nu = s_{\tau}^{-1} \pi^{u}_{1}$ by the Radon–Nikodym derivative. Substituting this correction into the conditional flow matching loss yields Eq.~\ref{eq:uot_wfm_k_1}, and hence the generated distribution recovers $\nu$. 

    Formally, Lemma \ref{lem:fm_recover_marginals} shows that training a flow matching model with the optimal source-fixed UOT coupling $\pi_{\tau}^{u}$, i.e., 
    \begin{equation} \label{eq:uot_cfm}
    \mathcal{L}_{\mathrm{UOT-CFM}}(\vtheta) = \E_{t \sim \mathcal{U}[0,1], \zvar \sim \pi_{\tau}^{u}, \xvar_{t} \sim p_{t}(\xvar_{t}|\zvar) } \| \vvar_{\theta}(t, \xvar_{t}) - \uvar_{t|\zvar}(\xvar_{t}|\zvar) \|_{2}^{2}.
    \end{equation}
    yields a flow matching model whose boundary conditions converge to $p_{0} \to \pi_{\tau, 0}^{u}, p_{1} \to \pi_{\tau, 1}^{u}$ as $\sigma \to 0$. By Theorem~\ref{thm:uot-ot-relation}, we have $\pi_{\tau, 0}^{u} = \mu$ and $\pi_{\tau, 1}^{u} \neq \nu$. Therefore, the UOT-CFM model generates a biased distribution.

   Moreover, we now show that our UOT-RFM model with the first-order correction recovers the true target distribution. From Theorem \ref{thm:uot-ot-relation}, we have $\pi_{\tau}^{u} \ll \nu$, so the Radon–Nikodym derivative exists and corresponds to the majority score. By our assumption $\nu \ll \pi_{\tau}^{u}$, it follows $\nu = s_{\tau}^{-1} \pi^{u}_{1}$. Therefore, 
    
    \begin{align}
    \mathcal{L}_{\mathrm{ours, k=1}}(\vtheta) 
    &= \E_{t \sim \mathcal{U}[0,1], \zvar \sim \pi_{\tau}^{u}(\zvar), \xvar_{t} \sim p_{t}(\xvar_{t}|\zvar) } \left[
    s_{\tau}(\xvar_{1})^{-1}\| \vvar_{\theta}(t, \xvar_{t}) - \uvar_{t|\zvar}(\xvar_{t}|\zvar) \|_{2}^{2} \right] \\
    &= \E_{t \sim \mathcal{U}[0,1], \zvar \sim \pi_{\tau}^{u}} \left[ 
    s_{\tau}(\xvar_{1})^{-1} \E_{\xvar_{t} \sim p_{t}(\xvar_{t}|\zvar)} \left[ \| \vvar_{\theta}(t, \xvar_{t}) - \uvar_{t|\zvar}(\xvar_{t}|\zvar) \|_{2}^{2} \right] \right]
    \end{align}
    
    Note that the reweighted coupling $s_{\tau}(\xvar_{1})^{-1} \pi_{\tau}^{u}(\xvar_{0}, \xvar_{1})$ has the true target distribution $\nu$ as its marginal.
    \begin{equation}
        \int s_{\tau}(\xvar_{1})^{-1} \pi_{\tau}^{u}(\xvar_{0}, \xvar_{1}) \, d\xvar_{0} = s_{\tau}(\xvar_{1})^{-1} \pi_{\tau,1}^{u}(\xvar_{1}) = \nu(\xvar_{1}).
    \end{equation}

    Then, following a similar argument as in the UOT-CFM case, our UOT-RFM model with the first-order correction ($k=1$) recovers the true target distribution $\nu$. Note that we specifically employ the source-fixed UOT coupling to ensure that the source marginal $\pi_{\tau,0}^u =\mu$ matches exactly with the initial distribution of the flow matching model. More generally, by a similar argument, UOT-RFM with correction order $k$ generates a distribution $p_1 \propto s_\tau^{-k} \pi_{\tau,1}^u$, up to a normalizing constant.
\end{proof}

\section{Related Works} \label{sec:related_work}
\paragraph{Generative Models for Long-tailed Data}
Many real-world datasets follow long-tailed distributions, where a few dominant classes (Head class) contain the majority of samples, while numerous minority classes (Tail class) consist of a smaller number of samples. Generative modeling for long-tailed distributions often fails to learn the tail classes, resulting in low-diversity, low-quality samples. To address this, several GAN-based approaches have been proposed. CB-GAN \citep{cbgan} introduces a regularizer by utilizing a pretrained classifier to balance class learning. gSR-GAN \citep{rangwani2022improving} mitigates tail-class mode collapse through group spectral regularization.
UTLO \citep{khorram2024taming} encourages head-to-tail knowledge sharing by combining an unconditional low-resolution generator with a conditional high-resolution generator.
More recently, diffusion-based methods have been developed for long-tailed generation. CBDM \citep{CBDM} and LTDM \citep{LTDM} improve tail-class quality by transferring knowledge from head to tail classes. In parallel, test-time guidance methods have been introduced. \citet{um2024dont} uses proxy class labels to guide minority sampling.  \citet{um2024self} leverages a self-consistent minority score for minority guidance. In contrast, our method is label-free and corrects majority bias directly during training by leveraging the geometric structure of data via Unbalanced Optimal Transport. Furthermore, to the best of our knowledge, our approach is the first flow matching model designed for long-tailed distributions.

\paragraph{Coupling in Flow Matching}
A key design choice in flow matching is the coupling between source and target samples. The original framework \citep{flowmatching} employs independent coupling. This independent coupling often produces curved trajectories due to flow crossing, increasing numerical errors and sampling cost \citep{caf, min_curvature_ode}. To mitigate this, recent works proposed OT-based couplings between mini-batches \citep{minitbatchot, tong2024improving} or trajectory refinement via pretrained models in Rectified Flow \citep{rectifiedflow}. \citet{eyring2024unbalancedness} introduced the UOT-based couplings to the image-to-image translation, motivated by its adaptability to class imbalance. Model-Aligned Coupling \citep{beyondOT} dynamically adjusts couplings during training, aligning them with the flow matching model being trained. However, these approaches do not tackle the majority bias in long-tailed generation. Our work complements them by combining UOT-based couplings with a reweighting mechanism that explicitly mitigates majority oversampling. This is the first attempt to utilize the density ratio between the target distribution and the UOT coupling as the majority score.  We employ this score to weight the flow matching objective for long-tailed generation and formally characterize the resulting generated distribution (Theorem \ref{thm:bias_correction}).

\section{Implementation Details}\label{sec:implementation_details} 
This section provides the specific implementation details for our experiments on the CIFAR-10 and 2D synthetic datasets. 

\subsection{Experiments on CIFAR-10} 

\paragraph{Datasets} We use two datasets for our image generation experiments: the standard CIFAR-10/100 dataset and their long-tailed version, CIFAR-10-LT and CIFAR-100-LT. The CIFAR-10-LT/100-LT are generated to simulate class imbalance, following an exponential decay distribution. The number of samples $n_i$ for each class $c_i$ is determined by the formula $n_i = \lfloor n_{\max} \cdot \mathcal{I}^{\frac{i}{M-1}} \rfloor$, where $M\in \{ 10,100\}$ is the total number of classes, $n_{\max}$ is the number of samples in the largest class, and the imbalance factor $\mathcal{I}$ is set to 0.01. 

\paragraph{Network Architecture} We employ the U-Net architecture provided in the \texttt{torchcfm} in \cite{tong2024improving}, without any modifications. The architecture uses four resolution levels with two residual blocks per level in both encoder and decoder, linked by skip connections at matching scales. Each block uses 3\(\times\)3 convolutions with Group Normalization, SiLU activations, and dropout. Down-sampling is performed by stride-2 convolutions, and up-sampling uses nearest-neighbor interpolation followed by a 3\(\times\)3 convolution. 

\paragraph{Training Details} 
All experiments on CIFAR-10/100 follow the default settings of \texttt{torchcfm}. We use the \texttt{dopri5} ODE solver. For optimization, we use the Adam optimizer with a learning rate of $2 \times 10^{-4}$. The model is trained for a total of 400,000 iterations with a batch size of 128. Data preprocessing includes \texttt{transforms.RandomHorizontalFlip()} and normalization of pixel values to the range $[-1, 1]$ using \texttt{transforms.Normalize(mean=[0.5, 0.5, 0.5], std=[0.5, 0.5, 0.5])}. For stable training, we apply a warmup schedule for the first 5,000 iterations, linearly increasing the learning rate from 0 to its target value, and use gradient clipping with an L2-norm threshold of 1.0. For Unbalanced Optimal Transport (UOT), the entropy regularization parameter $\epsilon$ is set to $5 \times 10^{-2}$, and the source marginal relaxation weight $\tau_1$ is set to infinity. 

\paragraph{Method Details} The training process of our proposed method is as follows: (1) Sample mini-batches from each distribution. (2) Compute the coupling (transport plan) between the two mini-batches. (3) Determine the weight for each sample based on the computed transport plan. (4) Estimate the vector fields by feeding the coupled sample pairs into the U-Net and compute the weighted loss. (5) Update the network parameters via backpropagation. The specifics of each coupling method are as follows: 
\begin{itemize}[leftmargin=*]
\item \textbf{ICFM:} Uses an independent coupling, assuming the two distributions are independent. 
\item \textbf{OT-CFM:} Computes the transport plan $\pi$ using the \texttt{pot.emd} function and samples pairs according to the normalized probability distribution. 
\item \textbf{UOT-CFM:} Computes the transport plan $\pi^{u}$ using the \texttt{pot.unbalanced.sinkhorn\_knopp\_unbalanced} function and samples pairs based on the normalized probabilities. 
\item \textbf{UOT-RFM:} Also uses \texttt{pot.unbalanced.sinkhorn\_knopp\_unbalanced}, but samples only one target for each source sample from the normalized transport plan $\pi_{\tau}^{u}$. 
\end{itemize} 

The sample weights are calculated using the column sums of the transport plan $\pi$, which corresponds to the empirical measure of the target distribution, denoted as $\tilde{\nu}$. The weight $s_{\tau}^{-k}(\mathbf{x}_1)$ is defined as Eq. \ref{eq:score}. The final loss function is the weighted mean squared error (MSE) between the vector fields: $\mathbb{E}_{(\mathbf{x}_0, \mathbf{x}_1) \sim \pi_{\tau}^{u}} \left[ s_{\tau}^{-k}(\mathbf{x}_1) \| \vvar_t(\mathbf{x}_0, \mathbf{x}_1) - \uvar_t(\mathbf{x}_0, \mathbf{x}_1) \|^2 \right]$. 

\paragraph{Evaluation Metrics} To assess the quality of the generated images, we use the Fr\'echet Inception Distance (FID), Precision, and Recall. FID scores are calculated using the \texttt{cleanfid} library. For evaluation against the standard CIFAR-10/100 dataset, we use the library's built-in feature statistics. For CIFAR-10-LT and CIFAR-100-LT, the real data statistics are computed from a long-tailed dataset generated in the same manner as the training set. Precision and Recall are measured based on a widely-used implementation\footnote{\url{https://github.com/blandocs/improved-precision-and-recall-metric-pytorch}}, where the real data distribution is also generated identically to the training setup. 

\section{Additional Experimental Results on Majority Bias}
\subsection{Class Distribution}
\begin{figure}[h]
    \centering
    \includesvg[width=1.0\linewidth]{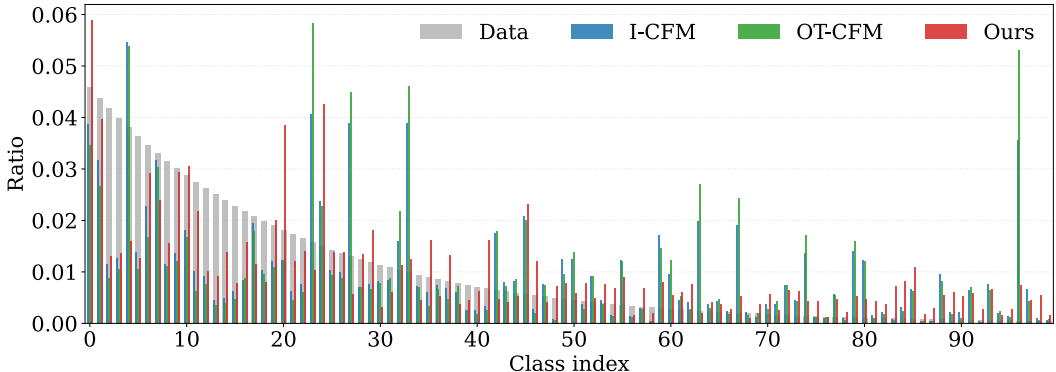}
    \caption{\textbf{Generated class distribution on CIFAR-100-LT} with $\mathcal{I}=0.01$.}
    \label{fig:classified_gen_dist_cifar100lt}
\end{figure}
Figure \ref{fig:classified_gen_dist_cifar100lt} visualizes the distribution of the CIFAR-100-LT dataset and the classification results of samples generated by various generative models trained on the CIFAR-100-LT dataset. The classification was performed using a pre-trained classification model (RepVGG-A2) trained on CIFAR100. Ideally, a generative model should produce samples that follow the data's class distribution; however, the results show a notable divergence. Our model, UOT-RFM ($\tau=2.0, k=16.0$), can be seen in Figure \ref{fig:classified_gen_dist_cifar100lt} to better follow the class distribution of the data compared to the baselines I-CFM and OT-CFM.

\begin{figure}[h]
    \centering
    \includesvg[width=0.8\linewidth]{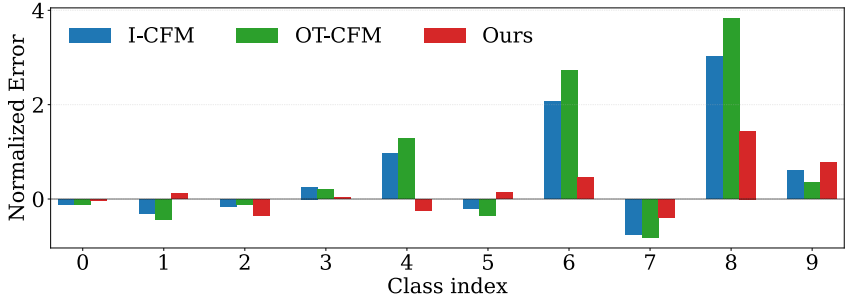}
    \caption{\textbf{{Signed Normalized Class Ratio Error (Signed NCRE) between generated sample and CIFAR-10-LT}. The average NCREs are 0.84 for I-CFM, 1.02 for OT-CFM, and 0.40 for our method.
    }
    }
    \label{fig:norm_diff_cifar10lt}
\end{figure}

In Figure \ref{fig:norm_diff_cifar10lt}, we present a visualization of the Signed Normalized Class Ratio Error (Signed NCRE), calculated as $\frac{r_{gen, i} - r_{data, i}}{r_{data, i}}$ for each class, where a lower absolute value implies better alignment with the data distribution. Our analysis reveals that both I-CFM and OT-CFM exhibit significant discrepancies, while our proposed method shows notably better class distribution alignment (with $\tau=1.0, k=10.0$). 
This is quantitatively supported by the mean of NCRE, which are 0.84, 1.02, and 0.40 for I-CFM, OT-CFM, and our method, respectively.

\clearpage
\begin{figure}[h]
    \centering
    \includesvg[width=1.0\linewidth]{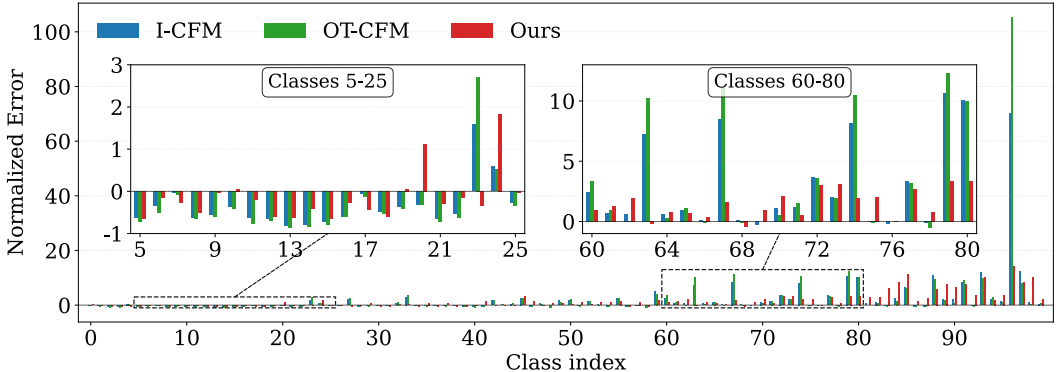}
    \caption{\textbf{Signed Normalized Class Ratio Error (Signed NCRE) between generated sample and CIFAR-100-LT}. The average NCREs are 2.41 for I-CFM, 2.79 for OT-CFM, and 1.82 for our method.
    }
    \label{fig:norm_diff_zoom}
\end{figure}

We visualized the Signed NCRE of CIFAR-100-LT for each class in Figure \ref{fig:norm_diff_zoom}.
While I-CFM and OT-CFM often show large discrepancies, our method ($\tau=2.0, k=16.0$) shows a much similar class distribution to that of data. 
The mean NCRE scores are 2.41 for I-CFM, 2.79 for OT-CFM, and 1.82 for our method, indicating superior distributional fidelity. The two zoomed-in subplots further illustrate the advantage of our approach, particularly in modeling minority classes. In the right subplot (classes 60–80), our method shows substantially lower normalized errors, while I-CFM and OT-CFM display large discrepancies for these rare classes. These visual and quantitative results together confirm that our method more accurately preserves the original class distribution, especially in the challenging tail regions.

\subsection{Classwise Negative Log-Likelihood (NLL)}

\begin{figure}[h]
    \centering
    \includesvg[width=0.9\linewidth]{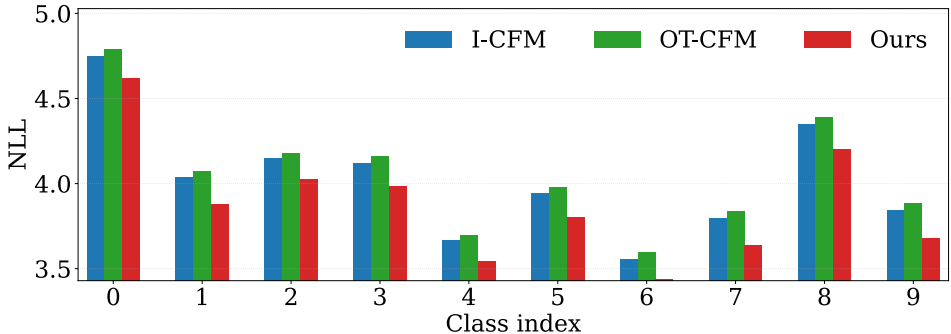}
    \caption{\textbf{Class-wise negative log-likelihood of CIFAR-10-LT} data samples on each model. Our method achieves an overall average NLL of 3.88, while I-CFM and OT-CFM achieve 4.02 and 4.06, respectively.
    }
    \label{fig:classwise_nll_cifar10lt}
\end{figure}
To measure how faithfully each model adheres to the distribution, we also measure the Negative Log-Likelihood (NLL). The log-likelihood of data sample $\xvar_1$ is computed by solving an ODE at each data sample as shown in the following equation:
\begin{equation*}
    \log p_1(\xvar_1) = \log p_0(\xvar_0) - \int_1^0 \text{div}_t(\vvar) dt,
\end{equation*}
where integral of divergence is approximated by accumulation of ODE simulation. In our experiments, we use the BPD(bits per dimension) for comparison as $\text{NLL} = {- \log p_1(\xvar_1)}/{(\log (2) \cdot d)}$, where $d$ denotes dimension of data. %

To further analyze our model's performance on long-tailed distributions, we visualize the class-wise negative log-likelihood (NLL) on the CIFAR-10-LT (imbalance factor $\mathcal{I}=0.01$) dataset in Figure \ref{fig:classwise_nll_cifar10lt}. Our method achieves the lowest (best) NLL across all ten classes, from index 0 to 9, which indicates that our model can more accurately estimate the distribution within each class. Our method achieves an overall mean NLL of 3.88, while I-CFM and OT-CFM achieve 4.02 and 4.06, respectively.

\begin{figure}[h]
    \centering
    \includesvg[width=0.9\linewidth]{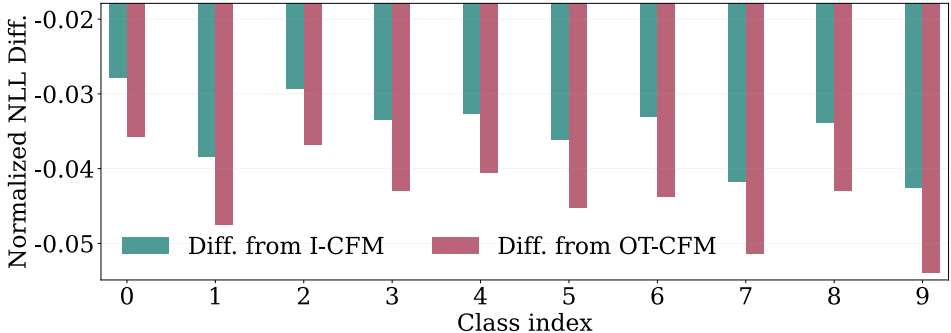}
    \caption{\textbf{Normalized difference of class-wise NLL between our model and the baselines on CIFAR-10-LT}.}
    \label{fig:classwise_nll_diff_cifar10lt}
\end{figure}

To provide a more detailed view of this improvement, Figure \ref{fig:classwise_nll_diff_cifar10lt} shows the normalized difference in NLL between our method and the baselines, I-CFM and OT-CFM. The results demonstrate that our NLL values are consistently lower than the baselines, with improvements ranging from a minimum of approximately 2.8\% (class 0 and 2, from I-CFM) to a maximum of over 5.0\% (class 7 and 9, from OT-CFM). Additionally, the difference is more pronounced in the tail classes. This provides clear evidence that our approach is more effective at capturing the true data distribution, particularly for the underrepresented classes in a long-tailed setting.

\begin{figure}[h]
    \centering
    \includesvg[width=1.0\linewidth]{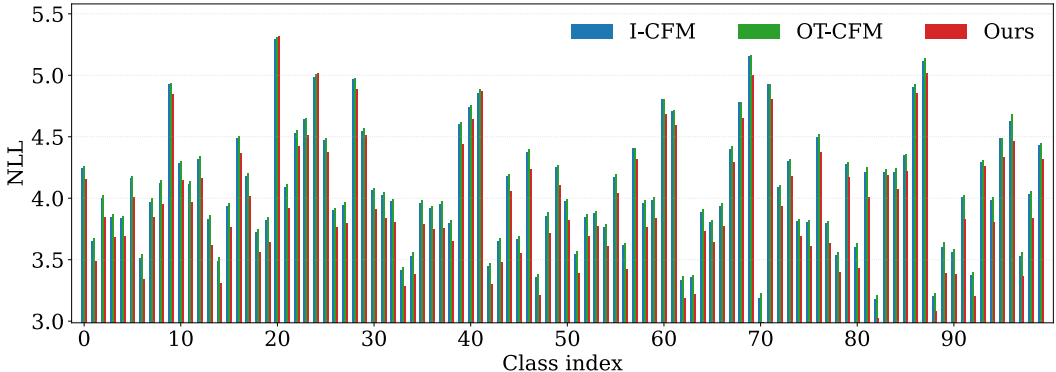}
    \caption{\textbf{Class-wise negative log-likelihood of CIFAR-100-LT} data samples on each model. Our method achieves an overall average NLL of 3.94, while I-CFM and OT-CFM achieve 4.08 and 4.10, respectively.
    }
    \label{fig:classwise_nll}
\end{figure}
Similarly, we also measured and visualized the class-wise negative log-likelihood for CIFAR-100-LT (imbalance factor $\mathcal{I}=0.01$). Figure \ref{fig:classwise_nll} visualizes the average NLL(lower is better) for each class and the mean NLL. Our model achieves 3.94 mean NLL which is lower than I-CFM(4.08) and OT-CFM(4.10) in the CIFAR-100-LT dataset.

\clearpage
\begin{figure}[h]
    \centering
    \includesvg[width=1.0\linewidth]{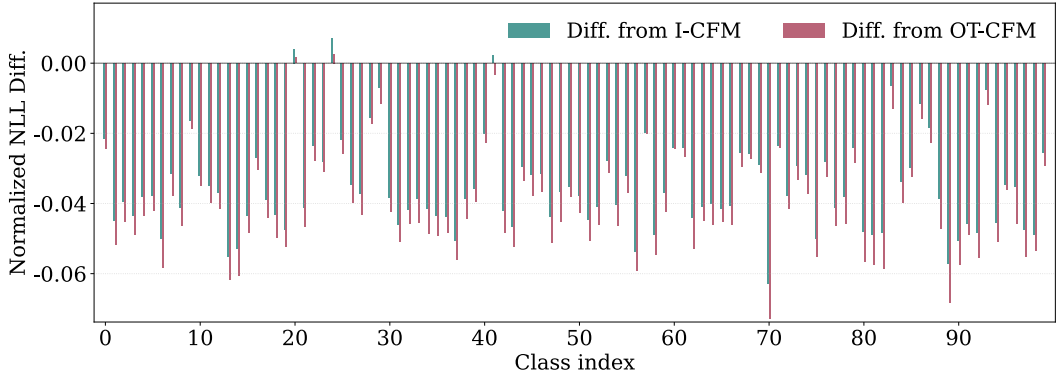}
    \caption{\textbf{Normalized difference of class-wise NLL between our model and the baselines on CIFAR-100-LT}.
    }
    \label{fig:classwise_nll_diff}
\end{figure}
We also show the normalized difference between our model and the competing models in Figure \ref{fig:classwise_nll_diff} to emphasize the gap. Our results show that our method achieves a lower(better) NLL than I-CFM and OT-CFM for most classes.

\section{Additional Ablation Studies}
\begin{table}[h]
  \centering
  \caption{\textbf{Ablation study on the correction order $k$ and the marginal matching strength $\tau$}. Reported values are FID scores.}
  \label{tab:ablation_fid_combined}
  \begin{tabular}{lcccccccc}
    \toprule
    & \multicolumn{4}{c}{LT$\to$LT} & \multicolumn{4}{c}{LT$\to$Balanced} \\
    \cmidrule(lr){2-5} \cmidrule(lr){6-9}
    $\tau$\textbackslash $k$ & 1.0 & 2.0 & 4.0 & 8.0 & 1.0 & 2.0 & 4.0 & 8.0 \\
    \midrule
    2.0 & 13.77 & 13.41 & 12.42 & 11.37 & 25.02 & 24.60 & 24.54 & 24.76  \\
    4.0 & 14.01 & 13.78 & 13.68 & 12.67 & 24.94 & 24.65 & 24.45 & 24.37  \\
    6.0 & 14.39 & 13.72 & 13.48 & 12.41 & 24.91 & 24.88 & 24.86 & 24.90  \\
    \bottomrule
  \end{tabular}
\end{table}

Table~\ref{tab:ablation_fid_combined} presents an ablation study on the effects of the correction order $k$ and the marginal matching strength $\tau$. All models were trained on the CIFAR-10-LT dataset. The ``LT$\to$LT'' columns show FID scores measured against the CIFAR-10-LT dataset itself, assessing fidelity to the training distribution. The ``LT$\to$Balanced'' columns show FID scores using the class-balanced CIFAR10 dataset as a reference, evaluating the generation of a balanced distribution. First, analyzing the LT$\to$LT results, the task is to faithfully replicate the long-tailed training distribution. In this scenario, a clear trend emerges: performance consistently improves as the correction order $k$ increases. For any given value of $\tau$, a larger $k$ leads to a lower (better) FID score. For example, when $\tau=2.0$, the FID score monotonically decreases from 13.77 at $k=1.0$ to a superior 11.37 at $k=8.0$. This indicates that overcorrection ($k>1$) is consistently beneficial, helping the model to more accurately estimate and represent the target long-tailed marginal distribution. In contrast, the LT$\to$Balanced setting reveals a more complex trade-off. Here, a smaller $\tau$ (e.g., 2.0) enables a strong corrective weight but diminishes the sampling probability of minor classes. Conversely, a larger $\tau$ (e.g., 6.0) improves the sampling of these classes but flattens the weights, reducing their corrective impact. This necessitates a higher correction order $k$ to induce overcorrection. For instance, with $\tau=4.0$, increasing $k$ from 1.0 to 8.0 improves the FID score from 24.94 to 24.37. However, excessive overcorrection can overshoot the balanced target, as seen for $\tau=2.0$, where the FID score worsens from 24.54 ($k=4.0$) to 24.76 ($k=8.0$).

\clearpage
\begin{table*}[h]
\centering
\caption{\textbf{FID ($\downarrow$) scores for CIFAR-10 based datasets}. The \textbf{boldface} and \underline{underlined} values indicate the best and second-best performance. * indicates the best performance of UOT-RFM.
} 
\label{tab:fid_scores_cifar10}
\begin{tabular}{c ccc ccc ccc}
\toprule
& \multicolumn{3}{c}{CIFAR-10} & \multicolumn{3}{c}{CIFAR-10-LT (0.01)} & \multicolumn{3}{c}{CIFAR-10-LT (0.001)} \\
\cmidrule(lr){2-4} \cmidrule(lr){5-7} \cmidrule(lr){8-10}
model($k$) \textbackslash $\tau$ & 2.0 & 4.0 & 6.0 & 2.0 & 4.0 & 6.0 & 2.0 & 4.0 & 6.0 \\
\midrule
I-CFM & \multicolumn{3}{c}{\rule[0.5ex]{0.9cm}{0.4pt} 3.78 \rule[0.5ex]{0.9cm}{0.4pt}} & \multicolumn{3}{c}{\rule[0.5ex]{0.9cm}{0.4pt} 14.39 \rule[0.5ex]{0.9cm}{0.4pt}} & \multicolumn{3}{c}{\rule[0.5ex]{0.9cm}{0.4pt} \underline{17.54} \rule[0.5ex]{0.9cm}{0.4pt}} \\
\midrule
OT-CFM & \multicolumn{3}{c}{\rule[0.5ex]{0.9cm}{0.4pt} 3.64 \rule[0.5ex]{0.9cm}{0.4pt}} & \multicolumn{3}{c}{\rule[0.5ex]{0.9cm}{0.4pt} 17.49 \rule[0.5ex]{0.9cm}{0.4pt}} & \multicolumn{3}{c}{\rule[0.5ex]{0.9cm}{0.4pt} 21.26 \rule[0.5ex]{0.9cm}{0.4pt}} \\
\midrule
UOT-CFM & \underline{3.62} & 3.72 & 3.79 & \underline{14.31} & 14.37 & 14.41 & 19.16 & 18.13 & 18.23 \\
\midrule
ours(1.0) & 3.62 & 3.71 & \textbf{3.58}* & 13.97 & 14.01 & 14.37 & 17.15 & 18.16 & 17.89 \\
ours(2.0) & 3.80 & 3.60 & 3.70 & 13.22 & 13.76 & 13.55 & 16.96 & 17.12 & 17.49 \\
ours(4.0) & 4.12 & 3.87 & 3.82 & 12.36 & 13.73 & 13.62 & 16.02 & 16.52 & 17.34 \\
ours(6.0) & 4.51 & 3.92 & 3.82 & 11.60 & 13.58 & 13.23 & 15.13 & 16.80 & 16.89 \\
ours(8.0) & 4.64 & 4.63 & 3.92 & 11.41 & 12.81 & 12.34 & 13.32 & 16.24 & 15.87 \\
ours(10.0) & 5.21 & 4.36 & 3.92 & \textbf{11.07}* & 11.96 & 12.90 & \textbf{12.84}* & 15.05 & 16.06 \\
ours(16.0) & 7.28 & 4.63 & 4.20 & 12.76 & 11.37 & 12.32 & 13.25 & 13.36 & 14.64 \\
\bottomrule
\end{tabular}
\end{table*}
Table \ref{tab:fid_scores_cifar10} shows the Fréchet Inception Distance (FID) scores on three CIFAR-10 based datasets: CIFAR-10, CIFAR-10-LT ($\mathcal{I}=0.01$), and CIFAR-10-LT ($\mathcal{I}=0.001$). Our model consistently achieved the best FID score on each dataset. For the standard CIFAR-10 dataset, our model with $k=1.0$ and $\tau = 6.0$ achieved the lowest FID score of 3.58, which is an improvement over the previous state-of-the-art model UOT-CFM's best score of 3.62. This improvement can be attributed to the correction of the intra-class feature majority bias. For the CIFAR-10-LT (0.01) and CIFAR-10-LT (0.001) datasets, which are long-tailed distributions, our model achieved the best scores of 11.07 and 12.84, respectively, with $k=10.0$ and $\tau = 2.0$. These results represent a significant improvement over the baseline models I-CFM and UOT-CFM, which achieved 14.39 and 17.54 respectively.

\begin{table*}[h]
\centering
\caption{\textbf{Precision ($\uparrow$) scores for CIFAR-10 based datasets.}}
\label{tab:precision_scores_cifar10}
\begin{tabular}{c ccc ccc}
\toprule
& \multicolumn{3}{c}{CIFAR-10} & \multicolumn{3}{c}{CIFAR-10-LT (0.01)} \\
\cmidrule(lr){2-4} \cmidrule(lr){5-7}
model($k$) \textbackslash $\tau$ & 2.0 & 4.0 & 6.0 & 2.0 & 4.0 & 6.0 \\
\midrule
I-CFM & \multicolumn{3}{c}{\rule[0.5ex]{0.9cm}{0.4pt} 0.46 \rule[0.5ex]{0.9cm}{0.4pt}} & \multicolumn{3}{c}{\rule[0.5ex]{0.9cm}{0.4pt} 0.68 \rule[0.5ex]{0.9cm}{0.4pt}} \\
\midrule
OT-CFM & \multicolumn{3}{c}{\rule[0.5ex]{0.9cm}{0.4pt} 0.47 \rule[0.5ex]{0.9cm}{0.4pt}} & \multicolumn{3}{c}{\rule[0.5ex]{0.9cm}{0.4pt} \textbf{0.71} \rule[0.5ex]{0.9cm}{0.4pt}} \\
\midrule
UOT-CFM & 0.46 & 0.46 & 0.47 & 0.67 & 0.67 & 0.67 \\
\midrule
ours(1.0) & 0.46 & 0.46 & 0.47 & 0.65 & 0.65 & 0.65 \\
ours(2.0) & 0.47 & 0.46 & 0.47 & 0.64 & 0.65 & 0.66 \\
ours(4.0) & 0.47 & 0.46 & 0.46 & 0.63 & 0.65 & 0.65 \\
ours(6.0) & 0.47 & 0.46 & 0.46 & 0.65 & 0.65 & 0.65 \\
ours(8.0) & 0.47 & 0.47 & 0.46 & 0.63 & 0.64 & 0.64 \\
ours(10.0) & 0.47 & 0.47 & 0.46 & 0.61 & 0.62 & 0.62 \\
ours(16.0) & \textbf{0.48} & 0.47 & 0.46 & 0.63 & 0.62 & 0.63 \\
\bottomrule
\end{tabular}
\end{table*}

\begin{table}[h]
\centering
\caption{\textbf{Recall ($\uparrow$) scores for CIFAR-10 based datasets.}}
\label{tab:recall_scores_cifar10}
\begin{tabular}{c ccc ccc}
\toprule
& \multicolumn{3}{c}{CIFAR-10} & \multicolumn{3}{c}{CIFAR-10-LT (0.01)} \\
\cmidrule(lr){2-4} \cmidrule(lr){5-7}
model($k$) \textbackslash $\tau$ & 2.0 & 4.0 & 6.0 & 2.0 & 4.0 & 6.0 \\
\midrule
I-CFM & \multicolumn{3}{c}{\rule[0.5ex]{0.9cm}{0.4pt} 0.38 \rule[0.5ex]{0.9cm}{0.4pt}} & \multicolumn{3}{c}{\rule[0.5ex]{0.9cm}{0.4pt} 0.29 \rule[0.5ex]{0.9cm}{0.4pt}} \\
\midrule
OT-CFM & \multicolumn{3}{c}{\rule[0.5ex]{0.9cm}{0.4pt} 0.38 \rule[0.5ex]{0.9cm}{0.4pt}} & \multicolumn{3}{c}{\rule[0.5ex]{0.9cm}{0.4pt} 0.24 \rule[0.5ex]{0.9cm}{0.4pt}} \\
\midrule
UOT-CFM & 0.38 & 0.38 & 0.38 & 0.28 & 0.29 & 0.28 \\
\midrule
ours(1.0) & 0.38 & 0.38 & 0.38 & 0.27 & 0.28 & 0.28 \\
ours(2.0) & 0.38 & 0.39 & 0.38 & 0.29 & 0.28 & 0.28 \\
ours(4.0) & 0.38 & 0.38 & \textbf{0.39} & 0.29 & 0.29 & 0.28 \\
ours(6.0) & 0.38 & 0.38 & \textbf{0.39} & 0.28 & 0.28 & 0.28 \\
ours(8.0) & 0.38 & 0.38 & 0.38 & 0.33 & 0.30 & 0.29 \\
ours(10.0) & 0.38 & 0.38 & 0.38 & 0.38 & \textbf{0.41} & \textbf{0.41} \\
ours(16.0) & 0.36 & 0.38 & \textbf{0.39} & 0.32 & 0.32 & 0.31 \\
\bottomrule
\end{tabular}
\end{table}

\begin{table}[h]
\centering
\caption{\textbf{F1 ($\uparrow$) scores for CIFAR-10 based datasets.}}
\label{tab:f1_scores_cifar10}
\begin{tabular}{c ccc ccc}
\toprule
& \multicolumn{3}{c}{CIFAR-10} & \multicolumn{3}{c}{CIFAR-10-LT (0.01)} \\
\cmidrule(lr){2-4} \cmidrule(lr){5-7}
model($k$) \textbackslash $\tau$ & 2.0 & 4.0 & 6.0 & 2.0 & 4.0 & 6.0 \\
\midrule
I-CFM & \multicolumn{3}{c}{\rule[0.5ex]{0.9cm}{0.4pt} 0.42 \rule[0.5ex]{0.9cm}{0.4pt}} & \multicolumn{3}{c}{\rule[0.5ex]{0.9cm}{0.4pt} 0.41 \rule[0.5ex]{0.9cm}{0.4pt}} \\
\midrule
OT-CFM & \multicolumn{3}{c}{\rule[0.5ex]{0.9cm}{0.4pt} 0.42 \rule[0.5ex]{0.9cm}{0.4pt}} & \multicolumn{3}{c}{\rule[0.5ex]{0.9cm}{0.4pt} 0.36 \rule[0.5ex]{0.9cm}{0.4pt}} \\
\midrule
UOT-CFM & 0.42 & 0.42 & 0.41 & 0.40 & 0.42 & 0.40 \\
\midrule
ours(1.0) & 0.42 & 0.42 & 0.42 & 0.38 & 0.39 & 0.39 \\
ours(2.0) & 0.42 & 0.41 & 0.41 & 0.40 & 0.40 & 0.40 \\
ours(4.0) & 0.42 & 0.42 & \textbf{0.43} & 0.40 & 0.41 & 0.40 \\
ours(6.0) & 0.42 & 0.41 & 0.42 & 0.40 & 0.40 & 0.40 \\
ours(8.0) & 0.42 & 0.42 & 0.41 & 0.46 & 0.41 & 0.40 \\
ours(10.0) & 0.42 & 0.42 & 0.41 & 0.47 & \textbf{0.49} & \textbf{0.49} \\
ours(16.0) & 0.41 & 0.42 & 0.42 & 0.43 & 0.43 & 0.42 \\
\bottomrule
\end{tabular}
\end{table}

In addition to the FID scores, we conducted a quantitative analysis using precision and recall metrics, with the results summarized in Tables \ref{tab:precision_scores_cifar10} and \ref{tab:recall_scores_cifar10}. %
The precision scores generally remain comparable (CIFAR-10) to or lower (CIFAR-10-LT) than the baselines, while the recall scores show marked improvements (CIFAR-10-LT) in some cases. This behavior is particularly evident on the CIFAR-10-LT (0.01) dataset. For example, our model with $k=10.0$ and $\tau=4.0$ achieves a recall of 0.41, a significant improvement over the best baseline score of 0.29 from I-CFM and UOT-CFM. This enhancement in recall highlights our model's increased capability to generate diverse samples that cover the full spectrum of the data distribution, especially the minority classes. %

The F1 score, which harmonizes precision and recall, serves as a comprehensive metric for evaluating generative models, and our model consistently shows better performance in cases where recall is improved, as shown in Table \ref{tab:f1_scores_cifar10}. Specifically, on the CIFAR-10-LT (0.01) dataset, our models such as a setting with $k=10.0$ and $\tau=4.0$ achieve an F1 score of 0.49, which is higher than the best baseline score of 0.42 from UOT-CFM. This demonstrates that even if there is no gain in precision, the increase in recall from our method's ability to better capture minority features leads to a more balanced and representative learned distribution overall. %

\clearpage

\begin{table*}[h]
\centering
\caption{\textbf{FID ($\downarrow$) scores for CIFAR-100 based datasets.}}
\label{tab:fid_scores_cifar100}
\begin{tabular}{c ccc ccc ccc}
\toprule
& \multicolumn{3}{c}{CIFAR-100} & \multicolumn{3}{c}{CIFAR-100-LT (0.01)} & \multicolumn{3}{c}{CIFAR-100-LT (0.001)} \\
\cmidrule(lr){2-4} \cmidrule(lr){5-7} \cmidrule(lr){8-10}
model($k$) \textbackslash $\tau$ & 2.0 & 4.0 & 6.0 & 2.0 & 4.0 & 6.0 & 2.0 & 4.0 & 6.0 \\
\midrule
I-CFM & \multicolumn{3}{c}{\rule[0.5ex]{0.9cm}{0.4pt} \underline{6.39} \rule[0.5ex]{0.9cm}{0.4pt}} & \multicolumn{3}{c}{\rule[0.5ex]{0.9cm}{0.4pt} 25.56 \rule[0.5ex]{0.9cm}{0.4pt}} & \multicolumn{3}{c}{\rule[0.5ex]{0.9cm}{0.4pt} 31.86 \rule[0.5ex]{0.9cm}{0.4pt}} \\
\midrule
OT-CFM & \multicolumn{3}{c}{\rule[0.5ex]{0.9cm}{0.4pt} \textbf{6.14} \rule[0.5ex]{0.9cm}{0.4pt}} & \multicolumn{3}{c}{\rule[0.5ex]{0.9cm}{0.4pt} 31.90 \rule[0.5ex]{0.9cm}{0.4pt}} & \multicolumn{3}{c}{\rule[0.5ex]{0.9cm}{0.4pt} 38.37 \rule[0.5ex]{0.9cm}{0.4pt}} \\
\midrule
UOT-CFM & 6.48 & 6.63 & 6.45 & 26.96 & 25.78 & \underline{25.33} & 33.32 & 31.89 & \underline{31.83} \\
\midrule
ours(1.0) & 6.77 & 6.54* & 6.55 & 25.11 & 25.84 & 25.07 & 32.67 & 32.29 & 31.98 \\
ours(2.0) & 6.79 & 6.65 & 6.79 & 22.93 & 24.68 & 24.62 & 31.23 & 31.73 & 31.13 \\
ours(4.0) & 7.27 & 6.69 & 6.67 & 20.09 & 23.45 & 23.14 & 28.70 & 30.62 & 30.37 \\
ours(6.0) & 7.68 & 6.93 & 7.06 & 17.88 & 22.54 & 22.94 & 26.01 & 29.13 & 30.21 \\
ours(8.0) & 8.70 & 7.20 & 7.29 & 16.49 & 20.01 & 22.49 & 23.04 & 28.43 & 28.91 \\
ours(10.0) & 10.12 & 7.60 & 7.18 & \textbf{15.38}* & 18.74 & 21.33 & 20.04 & 25.35 & 28.18 \\
ours(16.0) & 15.90 & 8.73 & 7.57 & 17.22 & 16.00 & 17.90 & \textbf{18.40}* & 22.26 & 26.19 \\
\bottomrule
\end{tabular}
\end{table*}
Table \ref{tab:fid_scores_cifar100} presents the FID scores for CIFAR-100-based datasets. Our model demonstrates a significant performance boost on the long-tailed datasets compared to the baselines. On CIFAR-100-LT ($\mathcal{I}=0.01$), our best score is 15.38 (with $k=10.0, \tau=2.0$), a substantial improvement over the best baseline of 25.33 from UOT-CFM. For the even more imbalanced CIFAR-100-LT ($\mathcal{I}=0.001$) dataset, our model achieves a score of 18.40 (with $k=16.0, \tau=2.0$), which is a large improvement over the best baseline score of 31.83. This demonstrates our method's ability to effectively correct the majority bias present in these imbalanced datasets.

However, on the balanced CIFAR-100 dataset, our model's FID score is slightly higher than that of the competitive models. The best FID score for our method is 6.54, slightly underperforming the best baseline score of 6.14 from OT-CFM. This suggests that for a balanced dataset, a large correction order $k$ may not be optimal. Instead, very delicate hyperparameter tuning is likely required to achieve superior results. It is also worth noting that our approach, based on UOT-CFM, exhibits a relatively high FID score on CIFAR-100, where UOT-CFM itself has a score of 6.45. This can be attributed to the influence of UOT coupling. We leave the further refinement of the UOT coupling's effect on balanced datasets as a direction for future work.

\begin{table}[h]
\centering
\caption{\textbf{Precision ($\uparrow$) scores for CIFAR-100 based datasets.}}
\label{tab:precision_scores_cifar100}
\begin{tabular}{c ccc ccc}
\toprule
& \multicolumn{3}{c}{CIFAR-100} & \multicolumn{3}{c}{CIFAR-100-LT (0.01)} \\
\cmidrule(lr){2-4} \cmidrule(lr){5-7}
model($k$) \textbackslash $\tau$ & 2.0 & 4.0 & 6.0 & 2.0 & 4.0 & 6.0 \\
\midrule
I-CFM & \multicolumn{3}{c}{\rule[0.5ex]{0.9cm}{0.4pt} 0.41 \rule[0.5ex]{0.9cm}{0.4pt}} & \multicolumn{3}{c}{\rule[0.5ex]{0.9cm}{0.4pt} 0.62 \rule[0.5ex]{0.9cm}{0.4pt}} \\
\midrule
OT-CFM & \multicolumn{3}{c}{\rule[0.5ex]{0.9cm}{0.4pt} \textbf{0.43} \rule[0.5ex]{0.9cm}{0.4pt}} & \multicolumn{3}{c}{\rule[0.5ex]{0.9cm}{0.4pt} \textbf{0.73} \rule[0.5ex]{0.9cm}{0.4pt}} \\
\midrule
UOT-CFM & 0.41 & 0.41 & 0.41 & 0.72 & \textbf{0.73} & 0.72 \\
\midrule
ours(1.0) & 0.41 & 0.41 & 0.42 & 0.72 & \textbf{0.73} & 0.72 \\
ours(2.0) & 0.41 & 0.41 & 0.41 & 0.71 & 0.72 & 0.72 \\
ours(4.0) & 0.42 & 0.42 & 0.41 & 0.71 & 0.72 & 0.71 \\
ours(6.0) & 0.42 & 0.42 & 0.41 & 0.69 & 0.71 & 0.72 \\
ours(8.0) & 0.41 & 0.41 & 0.41 & 0.69 & 0.71 & 0.72 \\
ours(10.0) & 0.42 & 0.41 & 0.41 & 0.68 & 0.71 & 0.71 \\
ours(16.0) & \textbf{0.43} & 0.42 & 0.41 & 0.65 & 0.69 & 0.70 \\
\bottomrule
\end{tabular}
\end{table}

\begin{table}[h]
\centering
\caption{\textbf{Recall ($\uparrow$) scores for CIFAR-100 based datasets.}}
\label{tab:recall_scores_cifar100}
\begin{tabular}{c ccc ccc}
\toprule
& \multicolumn{3}{c}{CIFAR-100} & \multicolumn{3}{c}{CIFAR-100-LT (0.01)} \\
\cmidrule(lr){2-4} \cmidrule(lr){5-7}
model($k$) \textbackslash $\tau$ & 2.0 & 4.0 & 6.0 & 2.0 & 4.0 & 6.0 \\
\midrule
I-CFM & \multicolumn{3}{c}{\rule[0.5ex]{0.9cm}{0.4pt} \textbf{0.37} \rule[0.5ex]{0.9cm}{0.4pt}} & \multicolumn{3}{c}{\rule[0.5ex]{0.9cm}{0.4pt} 0.29 \rule[0.5ex]{0.9cm}{0.4pt}} \\
\midrule
OT-CFM & \multicolumn{3}{c}{\rule[0.5ex]{0.9cm}{0.4pt} 0.36 \rule[0.5ex]{0.9cm}{0.4pt}} & \multicolumn{3}{c}{\rule[0.5ex]{0.9cm}{0.4pt} 0.24 \rule[0.5ex]{0.9cm}{0.4pt}} \\
\midrule
UOT-CFM & \textbf{0.37} & 0.36 & 0.36 & 0.28 & 0.27 & 0.27 \\
\midrule
ours(1.0) & 0.36 & 0.36 & 0.36 & 0.27 & 0.27 & 0.27 \\
ours(2.0) & 0.36 & 0.36 & 0.36 & 0.28 & 0.28 & 0.29 \\
ours(4.0) & 0.35 & 0.36 & 0.35 & 0.27 & 0.28 & 0.29 \\
ours(6.0) & 0.35 & 0.35 & 0.36 & 0.28 & 0.29 & 0.29 \\
ours(8.0) & 0.34 & 0.36 & 0.36 & 0.29 & 0.30 & 0.28 \\
ours(10.0) & 0.33 & 0.35 & 0.36 & 0.30 & 0.30 & 0.29 \\
ours(16.0) & 0.30 & 0.34 & 0.36 & 0.30 & \textbf{0.32} & 0.31 \\
\bottomrule
\end{tabular}
\end{table}

\begin{table}[h]
\centering
\caption{\textbf{F1 ($\uparrow$) scores for CIFAR-100 based datasets.}}
\label{tab:f1_scores_cifar100}
\begin{tabular}{c ccc ccc}
\toprule
& \multicolumn{3}{c}{CIFAR-100} & \multicolumn{3}{c}{CIFAR-100-LT (0.01)} \\
\cmidrule(lr){2-4} \cmidrule(lr){5-7}
model($k$) \textbackslash $\tau$ & 2.0 & 4.0 & 6.0 & 2.0 & 4.0 & 6.0 \\
\midrule
I-CFM & \multicolumn{3}{c}{\rule[0.5ex]{0.9cm}{0.4pt} \textbf{0.39} \rule[0.5ex]{0.9cm}{0.4pt}} & \multicolumn{3}{c}{\rule[0.5ex]{0.9cm}{0.4pt} 0.40 \rule[0.5ex]{0.9cm}{0.4pt}} \\
\midrule
OT-CFM & \multicolumn{3}{c}{\rule[0.5ex]{0.9cm}{0.4pt} \textbf{0.39} \rule[0.5ex]{0.9cm}{0.4pt}} & \multicolumn{3}{c}{\rule[0.5ex]{0.9cm}{0.4pt} 0.36 \rule[0.5ex]{0.9cm}{0.4pt}} \\
\midrule
UOT-CFM & \textbf{0.39} & 0.38 & 0.38 & 0.40 & 0.40 & 0.40 \\
\midrule
ours(1.0) & 0.38 & 0.38 & \textbf{0.39} & 0.40 & 0.40 & 0.40 \\
ours(2.0) & 0.38 & 0.38 & 0.38 & 0.40 & 0.40 & 0.41 \\
ours(4.0) & 0.38 & \textbf{0.39} & 0.38 & 0.40 & 0.41 & 0.41 \\
ours(6.0) & 0.38 & 0.38 & \textbf{0.39} & 0.40 & 0.41 & 0.41 \\
ours(8.0) & 0.37 & 0.38 & 0.38 & 0.41 & 0.42 & 0.40 \\
ours(10.0) & 0.37 & 0.38 & 0.38 & 0.41 & 0.42 & 0.40 \\
ours(16.0) & 0.36 & 0.38 & 0.38 & 0.41 & \textbf{0.44} & 0.42 \\
\bottomrule
\end{tabular}
\end{table}

Tables \ref{tab:precision_scores_cifar100}, \ref{tab:recall_scores_cifar100}, and \ref{tab:f1_scores_cifar100} present the precision, recall, and F1 scores for the CIFAR-100-based datasets. Our method shows a clear improvement in recall, especially on the CIFAR-100-LT ($\mathcal{I}=0.01$) dataset. In the best case, with $k=16.0$ and $\tau=4.0$, our model achieves a recall score of 0.32, which is notably higher than the best baseline recall of 0.29 from I-CFM. This recall improvement indicates our model's enhanced ability to generate more diverse samples, particularly for the minority classes.

The F1 score, which balances precision and recall, also demonstrates our method's superiority in this setting. On the CIFAR-100-LT ($\mathcal{I}=0.01$) dataset, our model with $k=16.0$ and $\tau=4.0$ achieves an F1 score of 0.44. This is a clear improvement over the best baseline F1 score of 0.40 from I-CFM and UOT-CFM, showing that our method's emphasis on minority features leads to a more balanced and accurate generative process.

However, our method shows slightly lower results on the CIFAR-100. In this case, our approach requires more delicate coupling and tuning rather than aggressive re-weighting to perform optimally, similar to the CIFAR-10 case. 

\clearpage
\section{Additional Qualitative Examples} \label{app:add_qual_example}

\begin{figure}[h]
    \centering
    \subfloat[I-CFM]{\includegraphics[width=0.33\textwidth]{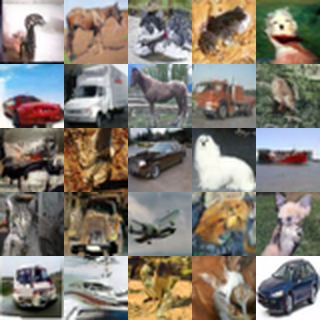}}
    \hfill
    \subfloat[OT-CFM]{\includegraphics[width=0.33\textwidth]{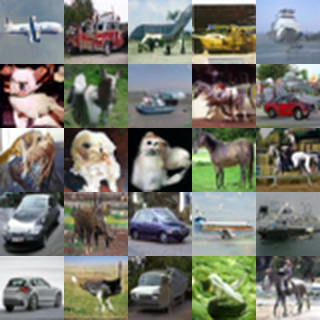}}
    \hfill
    \subfloat[UOT-RFM]{\includegraphics[width=0.33\textwidth]{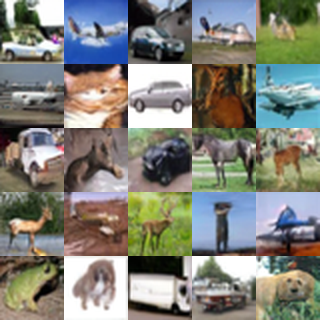}}
    \\[1em]
    \subfloat[I-CFM]{\includegraphics[width=0.33\textwidth]{figure/cifar_gen/cifar10lt_allrand_icfm.png}}
    \hfill
    \subfloat[OT-CFM]{\includegraphics[width=0.33\textwidth]{figure/cifar_gen/cifar10lt_allrand_otcfm.png}}
    \hfill
    \subfloat[UOT-RFM]{\includegraphics[width=0.33\textwidth]{figure/cifar_gen/cifar10lt_allrand_ours.png}}
    \caption{\textbf{CIFAR image generation results.} The first row shows images randomly generated from models trained on the balanced CIFAR10 dataset. The second row shows images from models trained on the CIFAR-10-LT dataset.
    }
    \label{fig:cifar_generation_cifar10}
\end{figure} 
To qualitatively verify our models, we include images actually generated by each model in this paper. Figure \ref{fig:cifar_generation_cifar10} shows 25 completely randomly sampled generated images, without cherry-picking. The first row displays generated images from each model trained on the balanced CIFAR-10 dataset, and the second row shows generated images from each model trained on the long-tailed CIFAR-10-LT (0.01) dataset. Our method, UOT-RFM, utilizes hyperparameters $\tau=1.0$ and $k=1.0$ when trained on CIFAR-10, and $\tau=1.0$ and $k=10.0$ when trained on CIFAR-10-LT.

On the CIFAR-10 dataset, our model appears comparable to the baseline models I-CFM and OT-CFM. However, on the CIFAR-10-LT dataset, I-CFM and OT-CFM tend to generate images that are somewhat blurry and noisy. In contrast, our generated images are relatively clean and distinct.

\clearpage
\begin{figure}[h]
    \centering
    \subfloat[I-CFM]{\includegraphics[width=0.33\textwidth]{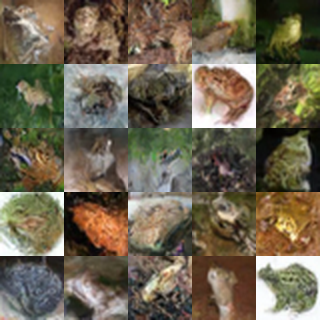}}
    \hfill
    \subfloat[OT-CFM]{\includegraphics[width=0.33\textwidth]{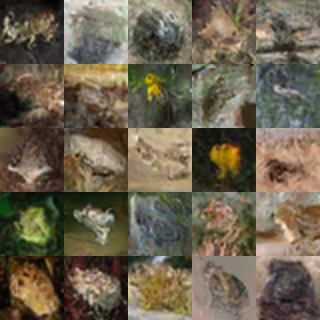}}
    \hfill
    \subfloat[UOT-RFM]{\includegraphics[width=0.33\textwidth]{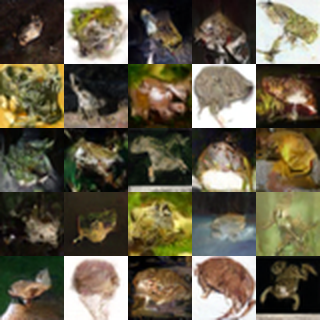}}
    \\[1em]
    \subfloat[I-CFM]{\includegraphics[width=0.33\textwidth]{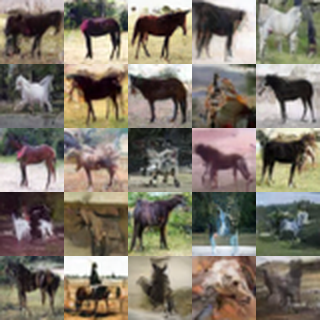}}
    \hfill
    \subfloat[OT-CFM]{\includegraphics[width=0.33\textwidth]{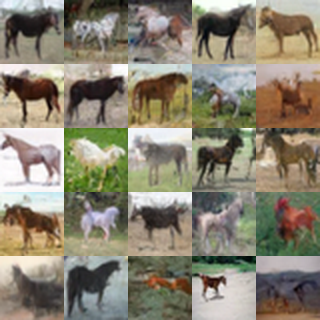}}
    \hfill
    \subfloat[UOT-RFM]{\includegraphics[width=0.33\textwidth]{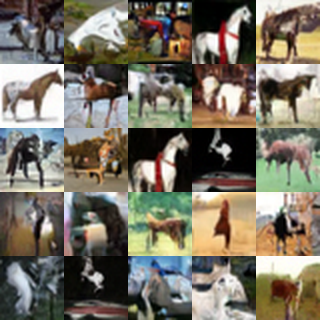}}
    \\[1em]
    \subfloat[I-CFM]{\includegraphics[width=0.33\textwidth]{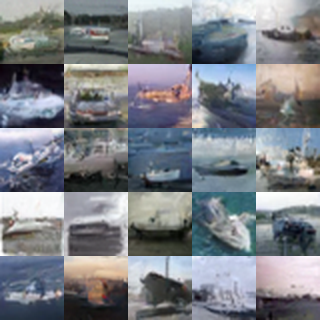}}
    \hfill
    \subfloat[OT-CFM]{\includegraphics[width=0.33\textwidth]{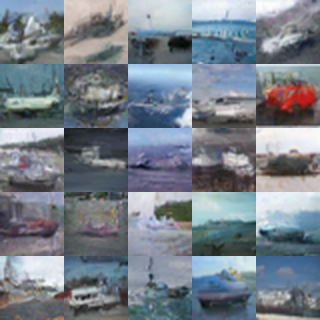}}
    \hfill
    \subfloat[UOT-RFM]{\includegraphics[width=0.33\textwidth]{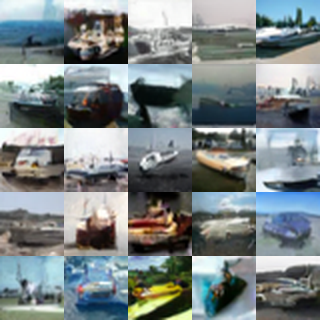}}
    \caption{\textbf{Tail-classes CIFAR image generation results.} The classified images from models trained on the CIFAR-10-LT dataset.
    }
    \label{fig:cifar_generation_cifar10lt_classified}
\end{figure} 
 Figure \ref{fig:cifar_generation_cifar10lt_classified} visualizes the generated images for the tail classes of the CIFAR-10-LT dataset. Specifically, the first row shows images generated for class 06 (frog), the second row for class 07 (horse), and the last row for class 08 (ship). Note that the images for the last tail class, truck, were previously shown in the main text (Figure \ref{fig:cifar10lt_tail_generation}). The samples in each grid image are selected by the top-confident values from a classification model, which was pre-trained on the balanced CIFAR-10 dataset. Observing the results, the images generated by I-CFM and OT-CFM show a tendency to be relatively noisy. In contrast, our method, UOT-RFM, yields images that are cleaner and exhibit greater diversity within the class.

\clearpage
\begin{figure}[h]
    \centering
    \subfloat[I-CFM]{\includegraphics[width=0.33\textwidth]{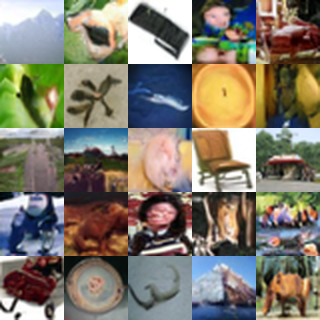}}
    \hfill
    \subfloat[OT-CFM]{\includegraphics[width=0.33\textwidth]{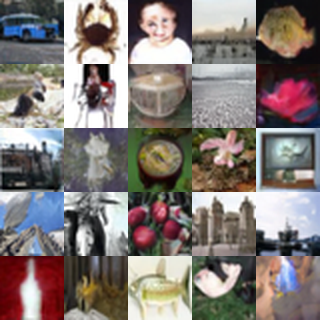}}
    \hfill
    \subfloat[UOT-RFM]{\includegraphics[width=0.33\textwidth]{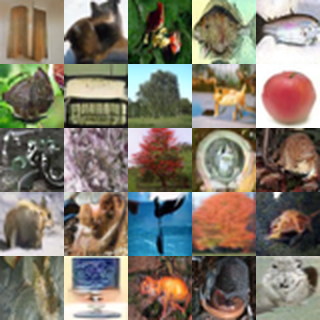}}
    \\[1em]
    \subfloat[I-CFM]{\includegraphics[width=0.33\textwidth]{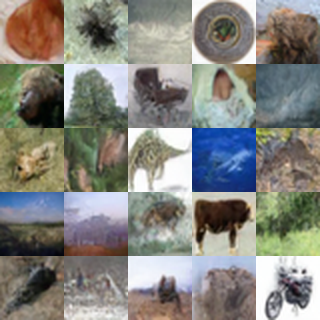}}
    \hfill
    \subfloat[OT-CFM]{\includegraphics[width=0.33\textwidth]{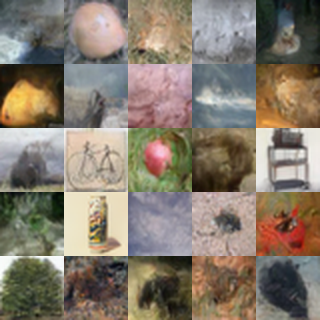}}
    \hfill
    \subfloat[UOT-RFM]{\includegraphics[width=0.33\textwidth]{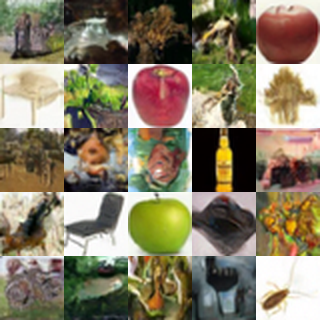}}
    \caption{\textbf{CIFAR image generation results.} The first row shows images randomly generated from models trained on the balanced CIFAR100 dataset. The second row shows images from models trained on the CIFAR-100-LT dataset.
    }
    \label{fig:cifar_generation_cifar100}
\end{figure} 
Figure \ref{fig:cifar_generation_cifar100} shows 25 completely randomly sampled images. The first row displays images generated when trained on the balanced CIFAR-100 dataset, and the second row shows images generated when trained on the long-tailed CIFAR-100-LT (0.01) dataset. The overall trend observed in CIFAR-100 is similar to that in CIFAR-10. Specifically, on the balanced CIFAR-100 dataset, our UOT-RFM method generates images that appear comparable to those from I-CFM and OT-CFM. However, when trained on the long-tailed CIFAR-100-LT, the generated images from I-CFM and OT-CFM tend to be blurry, whereas the images produced by UOT-RFM are noticeably cleaner and more distinct.

\clearpage
\begin{figure}[h]
    \centering
    \subfloat[I-CFM]{\includegraphics[width=0.33\textwidth]{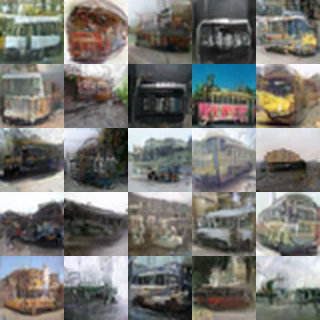}}
    \hfill
    \subfloat[OT-CFM]{\includegraphics[width=0.33\textwidth]{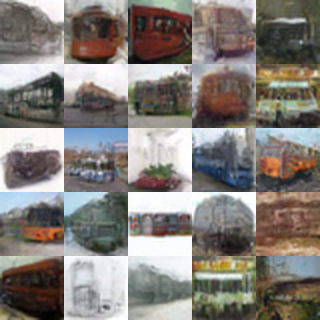}}
    \hfill
    \subfloat[UOT-RFM]{\includegraphics[width=0.33\textwidth]{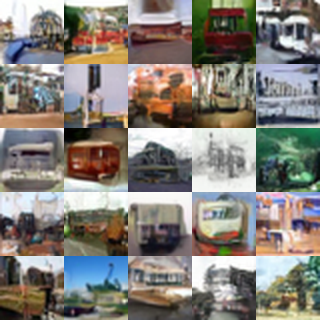}}
    \\[1em]
    \subfloat[I-CFM]{\includegraphics[width=0.33\textwidth]{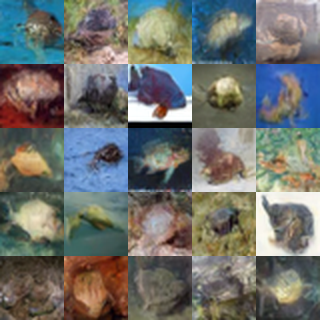}}
    \hfill
    \subfloat[OT-CFM]{\includegraphics[width=0.33\textwidth]{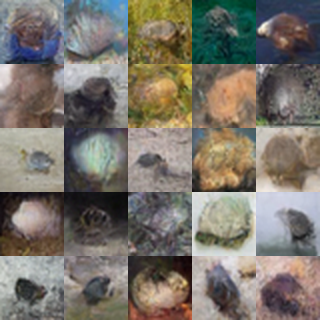}}
    \hfill
    \subfloat[UOT-RFM]{\includegraphics[width=0.33\textwidth]{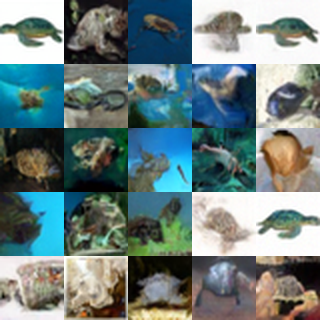}}
    \\[1em]
    \subfloat[I-CFM]{\includegraphics[width=0.33\textwidth]{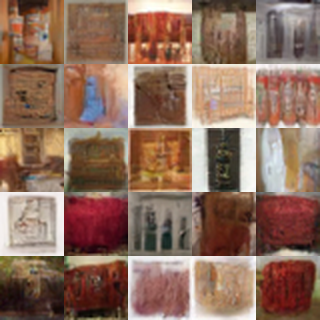}}
    \hfill
    \subfloat[OT-CFM]{\includegraphics[width=0.33\textwidth]{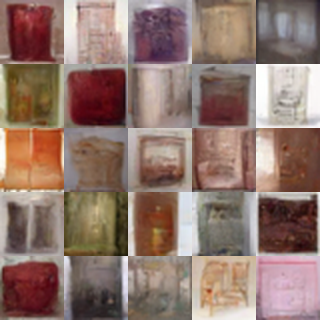}}
    \hfill
    \subfloat[UOT-RFM]{\includegraphics[width=0.33\textwidth]{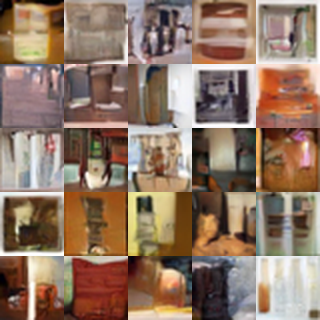}}
    \caption{\textbf{Tail-classes CIFAR image generation results.} The classified images from models trained on the CIFAR-100-LT dataset.
    }
    \label{fig:cifar_generation_cifar100lt_classified}
\end{figure} 
Figure \ref{fig:cifar_generation_cifar100lt_classified} visualizes the generated images for three specific tail classes from the CIFAR-100-LT dataset as examples. The first row shows images generated for class 81 (streetcar), the second row for class 93 (turtle), and the last row for class 94 (wardrobe). The samples in each grid image are selected by the top-confident values of a classification model. In this comparison, I-CFM and OT-CFM, particularly the latter, show a pronounced tendency to produce blurry and noisy images. In contrast, our UOT-RFM method produces images that are noticeably sharper and capture a better variety of features, such as shape, within each class.

\end{document}